\documentclass{article}

\usepackage[preprint]{icml2026}

\usepackage{microtype}
\usepackage{graphicx}
\usepackage{subcaption}
\usepackage{booktabs}
\usepackage{amsmath,amssymb,mathtools}
\usepackage{amsthm}
\usepackage{hyperref}
\usepackage[capitalize,noabbrev]{cleveref}
\usepackage{multirow}
\usepackage{makecell} 
\usepackage{placeins} % 在导言区添加
\usepackage[textsize=tiny]{todonotes}
\usepackage{algorithm}
\usepackage{algorithmic}
\icmltitlerunning{NECromancer: Breathing Life into Skeletons via BVH Animation}

\newcommand{\dataname}{Unified BVH Universe}
\newtheorem*{theorem*}{Theorem}

\begin{document}

\twocolumn[
\icmltitle{NECromancer: Breathing Life into Skeletons via BVH Animation}
\icmlsetsymbol{equal}{*}

\begin{icmlauthorlist}
  \icmlauthor{Mingxi Xu}{huawei1}
  \icmlauthor{Qi Wang}{huawei1}
  \icmlauthor{Zhengyu Wen}{huawei1}
  \icmlauthor{Phong Dao Thien}{huawei1}
  \icmlauthor{Zhengyu Li}{huawei1}
  \icmlauthor{Ning Zhang}{huawei1}
  \icmlauthor{Xiaoyu He}{huawei1}
  \icmlauthor{Wei Zhao}{huawei1}
  \icmlauthor{Kehong Gong}{huawei2}
  \icmlauthor{Mingyuan Zhang}{huawei1}
\end{icmlauthorlist}

\icmlaffiliation{huawei1}{Huawei Central Media Technology Institute}
\icmlaffiliation{huawei2}{Huawei Technologies Co., Ltd.} 
\icmlcorrespondingauthor{Mingyuan Zhang}{zhangmy718@gmail.com}

  % You may provide any keywords that you find helpful for describing your
  % paper; these are used to populate the "keywords" metadata in the PDF but
  % will not be shown in the document
  \icmlkeywords{Machine Learning, ICML}
  \vskip 0.2in 

  % ======= 简化Teaser插入 =======
  \centering
  \icmlkeywords{Machine Learning, ICML}

]

\printAffiliationsAndNotice{}

\begin{abstract}
Motion tokenization is a key component of generalizable motion models, yet most existing approaches are restricted to species-specific skeletons, limiting their applicability across diverse morphologies. We propose \textbf{NECromancer (NEC)}, a universal motion tokenizer that operates directly on arbitrary BVH skeletons. NEC consists of three components: (1) an \textbf{Ontology-aware Skeletal Graph Encoder (OwO)} that encodes structural priors from BVH files—including joint semantics, rest-pose offsets, and skeletal topology—into skeletal embeddings; (2) a \textbf{Topology-Agnostic Tokenizer (TAT)} that compresses motion sequences into a universal, topology-invariant discrete representation; and (3) the \textbf{Unified BVH Universe (UvU)}, a large-scale dataset aggregating BVH motions across heterogeneous skeletons. Experiments show that NEC achieves high-fidelity reconstruction under substantial compression and effectively disentangles motion from skeletal structure. The resulting token space supports cross-species motion transfer, composition, denoising, generation with token-based models, and text-motion retrieval, establishing a unified framework for motion analysis and synthesis across diverse morphologies.
Demo page: \url{https://animotionlab.github.io/NECromancer/}
\end{abstract} 
\vspace{-20pt}
\section{Introduction}
\label{sec:intro}

Generating dynamic 4D content is a core capability of world models operating in complex environments. While deformation-field methods extend static 3D representations with temporal dynamics, they often suffer from geometric inconsistency, high computational cost, and limited controllability~\cite{li2024dreammesh4d}. Moreover, many pipelines reconstruct 4D content from videos or video generative priors~\cite{cao2024avatargo,ren2024l4gm}, which limits the learning of structured and generalizable motion representations.

Skeleton-based modeling instead provides an interpretable, compact, and controllable abstraction for motion and has driven strong progress in human-centric 4D content~\cite{hong2022avatarclip,tevet2023human}. However, most existing methods do not generalize well across embodiments. Approaches based on fixed human templates~\cite{tevet2023human,zhang2024motiondiffuse} or canonical skeleton retargeting~\cite{wang2025animo} restrict expressivity for diverse body plans, while keypoint-only representations remain misaligned with modern CG pipelines~\cite{guo2022generating,plappert2016kit}. These assumptions of near-static skeletal topology fundamentally limit multi-species animation and universal motion understanding.

We therefore target \emph{topology-invariant yet semantics-preserving} motion representations that can accommodate diverse joint hierarchies and spatio-temporal relations. To this end, we introduce an \emph{Ontology-aware Skeletal Graph Encoder} (OwO) that maps rest-pose skeletons to per-joint structural embeddings capturing topological, postural, and semantic cues. Conditioned on these embeddings, we develop a \emph{Topology-Agnostic Tokenizer} that converts BVH motion sequences on arbitrary skeletons into compact, discrete tokens compatible with token-based generators.

To support learning and evaluation under diverse morphologies, we curate a BVH-centric benchmark by consolidating HumanML3D~\cite{guo2022generating}, Objaverse-XL~\cite{objaverseXL}, and Truebones Zoo~\cite{truebones_mocap} through extensive cleaning and normalization. The resulting dataset, \dataname, contains 47,807 high-quality text-annotated motion sequences spanning humans, quadrupeds, and other species.

Under a unified evaluation protocol, our tokenizer achieves strong reconstruction with substantial compression and outperforms RVQVAE baselines in retrieval accuracy, generation fidelity, and joint-space error. Beyond reconstruction, the learned token space natively supports \emph{any-skeleton} motion generation, transfer, and retrieval, while the OwO structural prior further enables topology-agnostic Motion–Text alignment.

Our contributions are threefold:
\begin{enumerate}
    \item \textbf{Ontology-aware Skeletal Graph Encoder.}
    We introduce a graph-based skeleton embedder with self-supervised objectives that capture topological, postural, and semantic structure, producing reusable joint-level representations for BVH-format data.

    \item \textbf{Topology-Agnostic Tokenizer.}
    To operate on arbitrary BVH skeletons and produces compact, token-level, skeleton-agnostic representations, we develop a graph-conditioned motion tokenizer. This design enables a variety of additional appealing applications once the tokenizer has been trained.

    \item \textbf{BVH-centric Benchmark.}
    We establish a large-scale curated BVH benchmark (47{,}807 sequences) spanning heterogeneous species and skeletal structures, supporting standardized evaluation of topology generalization for reconstruction, retrieval (R-Precision@K), and distributional quality (FID).
\end{enumerate}
\section{Related Works}

\label{sec:related_works}

\paragraph{Skeleton-based Motion Generation.}
Skeletons provide a compact, interpretable, and controllable abstraction for motion and have seen rapid progress under text-, audio-, and scene-conditioned settings. Text-to-motion methods evolved from embedding or variational alignment (e.g., MotionCLIP, TEMOS) to diffusion-based backbones enabling higher fidelity and local editing~\cite{tevet2022motionclip,petrovich2022temos,tevet2023human,kim2023flame}. Controllability and efficiency were further improved via multi-level conditioning and retrieval augmentation (MotionDiffuse, ReMoDiffuse), parameter-efficient adaptation (LoRA-MDM), and latent or autoregressive decoding (MLD, DART)~\cite{zhang2024motiondiffuse,zhang2023remodiffuse,chen2023executing,Zhao:DartControl:2025}. Audio-conditioned gesture and dance generation followed a similar trajectory, progressing from deterministic or VAE-based mappings to diffusion- or transformer-based models with rhythm or key-pose guidance~\cite{li2021audio2gestures,Ao2023GestureDiffuCLIP,siyao2022bailando,huang2025beat}. Despite strong performance, most systems assume a fixed human topology, limiting transfer, retargeting, and generalization to non-human embodiments.

\paragraph{Discrete Motion Tokenization.}
A parallel line of work discretizes motion into compact token sequences to enable LLM-style generation, editing, and retrieval. Vector-quantized autoencoders~\cite{van2017neural}, including residual and hierarchical variants, as well as masked modeling, have been widely used to build motion codebooks~\cite{guo2023momask} and unify multiple tasks within a single interface~\cite{jiang2024motiongpt}. Compared to purely continuous diffusion models, discrete representations better support long-horizon reasoning, scalable data mixing, and plug-and-play conditioning with language or audio. However, existing tokenizers are typically tied to a canonical human skeleton and fixed joint definitions, limiting their use as a universal motion representation~\cite{guo2022tm2t}. In contrast, we condition tokenization on an \emph{ontology-aware skeletal graph} derived from the rest pose, enabling topology-agnostic BVH motion codes that generalize across arbitrary skeleton templates.

\paragraph{Cross-Embodiment, Retargeting, and BVH-centric Corpora.}
Cross-embodiment motion is commonly addressed by mapping motions to canonical templates (e.g., SMPL, GHUM) or via skeleton-aware retargeting networks~\cite{loper2023smpl,xu2020ghum,aberman2020skeleton}. While effective for human avatars, such normalization reduces expressivity for disparate body plans and bone-length statistics. Recent work has begun exploring arbitrary-topology and animal motion generation~\cite{gat2025anytop,wang2025animo}, yet a universal tokenizer that natively supports arbitrary BVH templates remains underexplored. Given BVH’s prevalence in motion capture and digital content creation pipelines, adopting BVH as a unifying representation facilitates large-scale aggregation and standardized evaluation. Accordingly, we release a BVH-centric corpus that consolidates heterogeneous sources~\cite{guo2022generating,objaverse,objaverseXL,truebones_mocap} and evaluate generalization under seen and unseen topologies and species.
% ===================== UVU =====================
\section{UvU: Unified BVH Universe}
\label{sec:dataset}

\subsection{Dataset Overview}
\label{sec:data_overview}
The \dataname \ dataset is designed to enable generalized motion understanding and synthesis across varying skeletal topologies. Unlike existing datasets that focus on a single skeletal structure or similar structures (e.g., human-only~\cite{loper2023smpl} or species-specific motions~\cite{biggs2020wldo}), our dataset introduces highly divergent skeletons like fantasy creatures, enabling generative models with the ability to drive a variety of skeletons.

The data format we are using within \dataname, as shown in Fig.~\ref{fig:dataset_pipe}, (2) representation part, consists of 4 parts: \textbf{BVH Motion Data} for temporal joint rotations, \textbf{Base Mesh} for 3D mesh template, \textbf{Skin Weights} for mesh deformation and \textbf{Text Annotations}. In summary, our dataset consists of 47,807 animations.

\begin{figure*}[t]
  \centering
  \includegraphics[width=\textwidth]{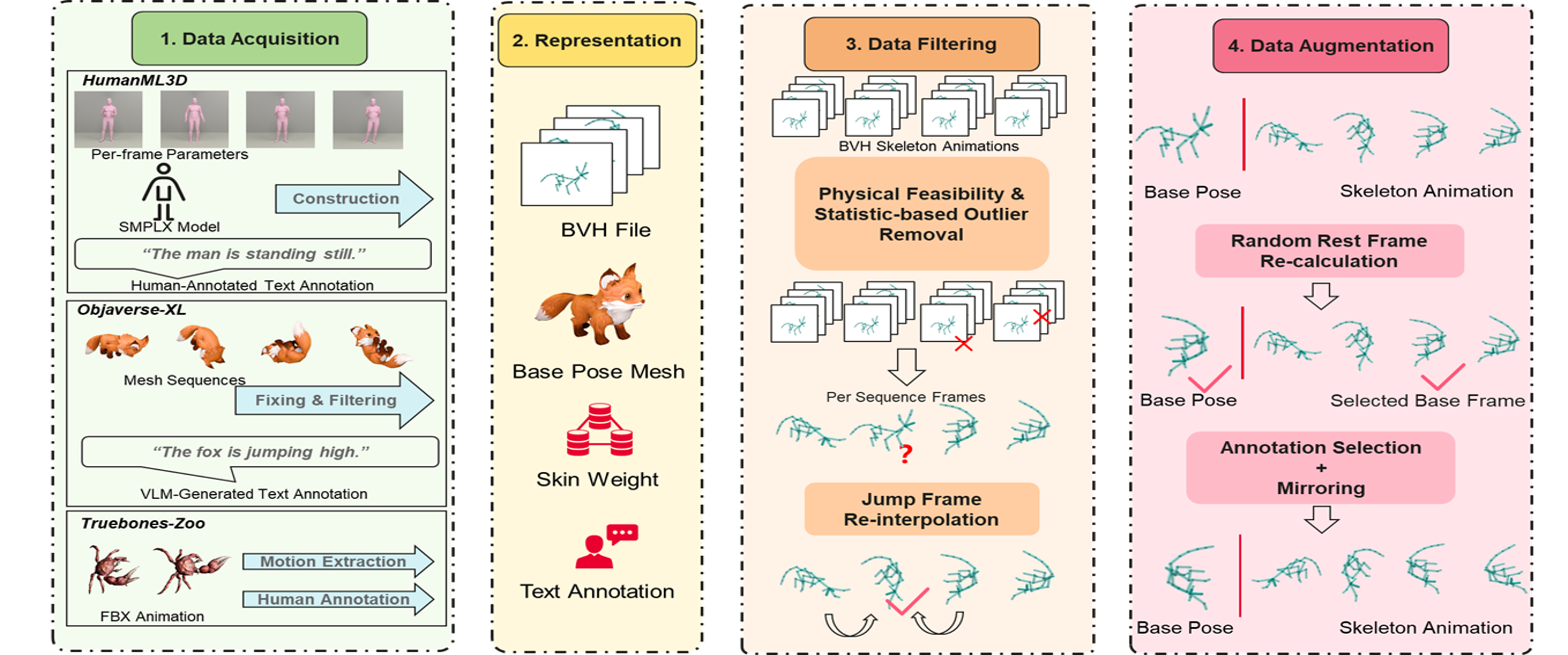}
  \caption{
  Overview of the \dataname\ dataset pipeline.
  Motion data from three existing datasets are unified into a standardized representation, including BVH files, base-pose meshes, skinning weights, and text annotations.
  Data filtering and smoothing are applied to ensure physical plausibility.
  During training, on-the-fly augmentations are used to further increase data diversity.
  }
  \label{fig:dataset_pipe}
\end{figure*}

\subsection{Data Preprocessing}
\label{sec:data_preprocess}

As shown in Fig.~\ref{fig:dataset_pipe}, to construct the \dataname \ dataset, we unify motion data from three heterogeneous sources: HumanML3D \cite{guo2022generating}, Objaverse-XL \cite{objaverseXL}, and Truebones Zoo \cite{truebones_mocap}. For HumanML3D, we reconstruct animations from SMPLX \cite{SMPL-X:2019} parameters, standardize skeleton structures, and convert them into BVH format. For Objaverse-XL, extensive filtering and correction are conducted to acquire high quality BVH animations. Through such preprocessing, we ensure that all animation data have only a single skeleton tree, global translations are aggregated on the root joint, and remove redundant or meaningless joints. Finally, Qwen2.5-VL \citep{bai2025qwen25vl} is utilized to further filter semantically acceptable motions and generate text annotations for them. Truebones Zoo provides diverse artist-animated FBX motions; we extract skeletal structures and animations, apply standardization, and supplement with human-annotated text descriptions. 
Data filtering and corrections are applied to improve consistency and quality. After preprocessing, we split each dataset separately. For HumanML3D, we use the original split \cite{guo2022generating}. For Objaverse-XL, we randomly divide the sequences into 85\% train and 15\% test. For Truebones Zoo, we ensure that at least one animal per category (e.g. biped, quadruped) is included, with 15\% of sequences as test and the rest as train. Detailed preprocessing procedures are provided in the supplementary material.
\section{Methods}
\label{sec:methods}

To support arbitrary skeletal topologies, we introduce a standalone Graph Embedder that encodes the rest-pose skeleton into joint-level identity embeddings. 
These embeddings condition both the encoder and decoder, enabling topology-aware tokenization with minimal overhead, as the graph is encoded once per skeleton.

An overview of the tokenizer structure is shown in Fig.~\ref{fig:tokenizer}.
We next describe the architecture of the Graph Embedder and its integration into the encoder and decoder.

\subsection{Problem Definition}
\label{sec:prob_def}

We unify all animation data into the BVH motion format during preprocessing.
An animation sequence with $J$ joints and $T$ frames is represented as
$\Theta \in \mathbb{R}^{T \times J \times 9}$.

For the root joint ($j=0$), the representation is
\begin{equation}
\Theta_{t,0} =
\begin{bmatrix}
\Delta x_t & \Delta y_t & \Delta z_t & \text{rot6D}_{t}^{(0)}
\end{bmatrix},
\end{equation}
where $(\Delta x_t, \Delta y_t, \Delta z_t)$ denotes the displacement relative to the previous frame along the three coordinate axes, and $\text{rot6D}_{t}^{(0)}$ denotes the global orientation in rot6D format~\citep{zhou2019continuity}.
For non-root joints ($j \neq 0$), the representation is
\begin{equation}
\Theta_{t,j} =
\begin{bmatrix}
0 & 0 & 0 & \text{rot6D}_{t}^{(j)}
\end{bmatrix},
\end{equation}
where $\text{rot6D}_{t}^{(j)}$ represents the rotation relative to the parent joint in the BVH kinematic tree, also expressed in rot6D format.

All joint rotations are defined relative to the BVH rest pose. Consequently, encoding and decoding BVH motions require access to the corresponding rest pose, which specifies a kinematic tree
$\mathcal{S} = (\mathcal{J}, \mathcal{E})$ encoding parent--child relationships among joints, fixed joint offsets $\mathbf{o}_{i \rightarrow j} \in \mathbb{R}^3$ from parent joint $i$ to joint $j$, as well as the name of each joint.

\subsection{OwO: Ontology-aware Skeletal Graph Encoder}
\label{sec:owo}

Since our tokenizer needs to handle different skeletal structures with varying numbers of joints, 
we aim to design a unified modeling scheme. 
Specifically, we require the tokenizer to transform the BVH motion $\Theta$ into a latent code
\[
\mathbf{z} \in \{1,2,\dots,K\}^{\lfloor \tfrac{T}{r} \rfloor \times R},
\]
where $r$ denotes the temporal compression ratio, $R$ is the number of residual tokens used in Residual Vector Quantization (RVQ) to represent the same continuous latent feature, 
and $K$ is the size of the codebook.

This formulation implies that, in the encoder stage of the tokenizer, spatial information across all joints within each frame must be effectively fused. 
In the decoder stage, the latent features must be able to accurately reconstruct the motion information of each joint. 
Therefore, we design a encoder that leverages the full information of the rest pose to extract a unique feature for each joint, serving as its identity embedding. To effectively model the topological structure among joints in the BVH, 

we construct a graph based on the rest pose information, 
and build several graph attention blocks on top of it for feature extraction. The detailed computational formulas are provided in the appendix.

Since 4D data is extremely scarce, we design a pre-training stage to better train the graph encoder, 
which can be applied to arbitrary rigged 3D models. Specifically, we design a set of self-supervised objectives to guide the training of the Graph Embedder. These tasks are crafted to encourage the node and global features to encode three critical aspects of skeletal structure: \textbf{geometric}, \textbf{topological}, and \textbf{semantic} information. The module is pretrained independently and frozen during downstream motion generation to serve as a transferable structural prior.

\begin{figure*}[t]
  \centering
  \includegraphics[width=\linewidth, keepaspectratio]{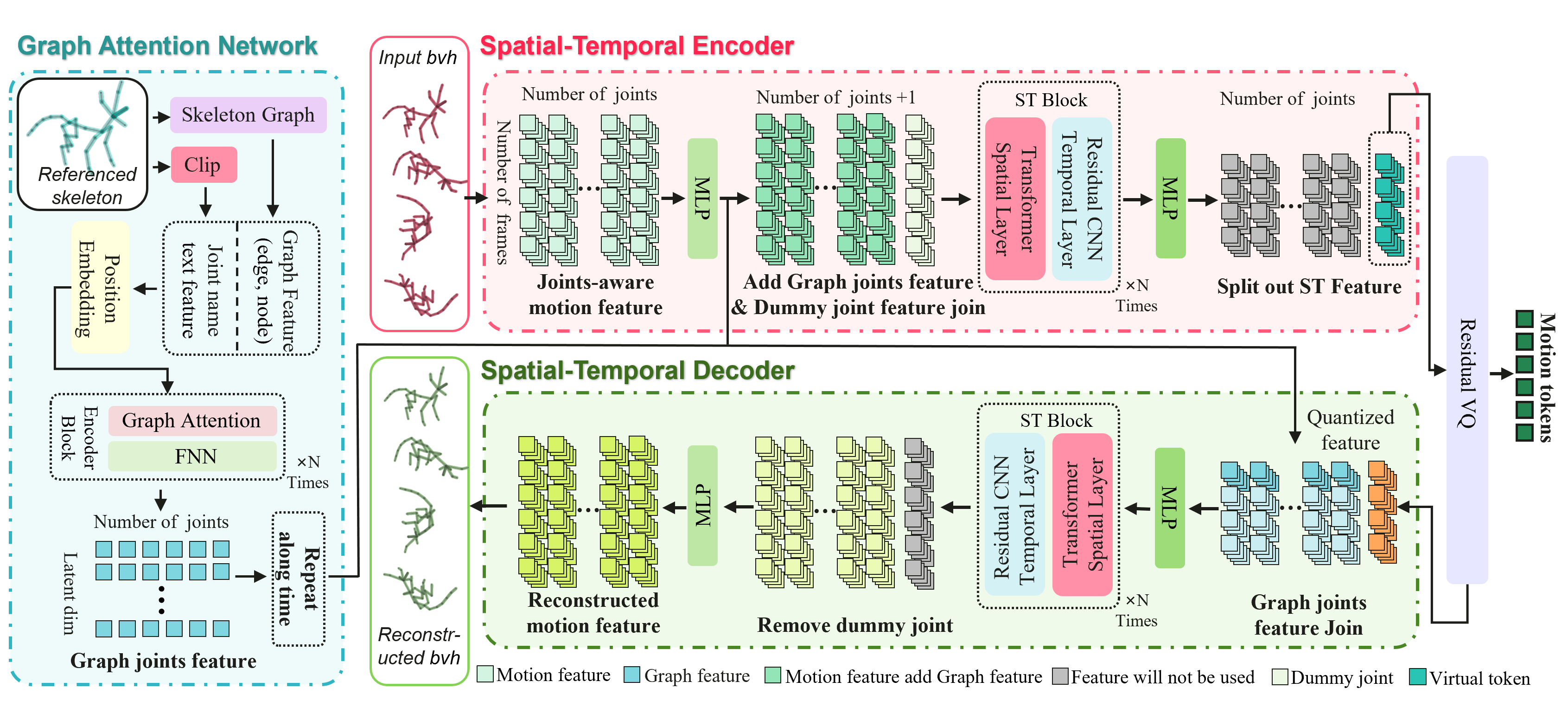}
  \caption{\textbf{Overview of NECromancer (NEC).} NEC consists of two main components: (a) Ontology-aware Skeletal Graph Encoder (OwO), which encodes static skeletal information (topology, joint names, rest pose) into structured graph-based joint features;(b) Topology-Agnostic Tokenizer (TAT), including Spatio-Temporal Encoder and Decoder, which maps motion sequences into a unified feature space, appends virtual joints, and converts them into discrete motion tokens.}
  \label{fig:tokenizer}
\end{figure*}
For the extracted node features, we design the following three types of loss functions to encourage them to capture different aspects of information:  

\begin{itemize}
  \item \textbf{Geometric Loss.}  
  This loss focuses on the recovery ability of the rest pose.  
  For any pair of joints $(i, j)$ information and their corresponding node features, a task-specific prediction head is required to output the offset of joint $j$ relative to joint $i$.  
  If the model can accurately solve this task, then the relative positions of all joints in the rest pose can be recovered.  

  \item \textbf{Topological Loss.}  
  This loss targets the connectivity information encoded in the kinematic tree of the original BVH.  
  Leveraging the tree structure, we require the model to correctly identify the lowest common ancestor (LCA) of any joint pair $(i, j)$ after the task-specific prediction head.  

  \begin{theorem*}
  If a model can correctly determine the LCA for any pair of nodes $(i, j)$ in a tree, then the entire tree topology can be uniquely reconstructed.
  \end{theorem*}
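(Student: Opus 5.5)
The plan is to reconstruct the parent of every joint directly from the LCA oracle. I work with a rooted tree $\mathcal{S}=(\mathcal{J},\mathcal{E})$ and write $\mathrm{LCA}(i,j)$ for the lowest common ancestor of $i$ and $j$. I interpret ``the entire tree topology'' as the rooted tree itself, i.e.\ the parent map $p:\mathcal{J}\setminus\{\text{root}\}\to\mathcal{J}$. The reconstruction proceeds in three steps: recover the ancestor relation, recover the root, and then recover each parent as the deepest proper ancestor.

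\textbf{Step 1 (ancestor relation).} For any pair, $i$ is an ancestor of $j$ (with $i=j$ allowed) if and only if $\mathrm{LCA}(i,j)=i$. This holds because an ancestor of $j$ is a common ancestor of $i$ and $j$, and no proper descendant of $i$ is an ancestor of $i$. So from the LCA table I read off the partial order $i\preceq j$, meaning ``$i$ is an ancestor of $j$.''

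\textbf{Step 2 (root).} The root is the unique node $r$ with $\mathrm{LCA}(r,j)=r$ for all $j$. Alternatively, $r=\mathrm{LCA}(i,j)$ taken over all pairs.

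\textbf{Step 3 (parents).} For each non-root $j$, let $A(j)=\{i\neq j: i\preceq j\}$ be its set of proper ancestors. In a rooted tree, $A(j)$ is exactly the vertex set of the root-to-parent path, so it is totally ordered by $\preceq$. I define $p(j)$ as the $\preceq$-maximal element of $A(j)$, namely the unique $i\in A(j)$ with $k\preceq i$ for all $k\in A(j)$.

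I then verify that this recovers the true parent and that the result is unique. The true parent is a proper ancestor of $j$, and every other proper ancestor is an ancestor of the parent, so the true parent is the maximum of $A(j)$. The edge set is then $\mathcal{E}=\{(p(j),j)\}$.

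For uniqueness, suppose two rooted trees on the same labeled vertex set induce identical LCA tables. Then by Step 1 they have identical ancestor relations, and by Step 3 identical parent maps, so they are the same tree.

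There is no real obstacle here. The only point needing care is the claim that the proper ancestors form a chain with the parent as the maximum, and this follows from the uniqueness of paths in trees.

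A short remark should also be added. Determining the LCA of all pairs requires the model to reference nodes by identity, so the labeled vertex set is given and the tree is reconstructed as a labeled rooted tree. If only the unrooted topology is of interest, it is obtained simply by forgetting the root.
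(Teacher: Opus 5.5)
Your proof is correct, but it takes a different route from the paper's.

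\textbf{The paper's route.} The paper gives a top-down recursive construction:
\begin{enumerate}
\item It finds the root $r$ as the node with $\mathrm{LCA}(r,v)=r$ for all $v$.
\item It splits the remaining nodes into the child subtrees of $r$ using the test $\mathrm{LCA}(u,v)\neq r$.
\item It takes the local root of each group as a child of $r$.
\item It recurses inside each group.
\end{enumerate}
Uniqueness is argued by noting that every step has a uniquely determined outcome.

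\textbf{Your route.} You avoid recursion altogether. You read the ancestor order off the table via $\mathrm{LCA}(i,j)=i$. Each parent is then obtained directly as the deepest proper ancestor. Uniqueness follows from an explicit injectivity chain: the LCA table determines the ancestor order, which determines the parent map.

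\textbf{What your route buys.} It is shorter and needs no induction. It also avoids two facts the paper uses without isolating them: that ``same child subtree of $r$'' is captured exactly by $\mathrm{LCA}(u,v)\neq r$, and that $\mathrm{LCA}$ restricted to a subtree agrees with that subtree's own LCA. It also proves something slightly stronger. Only the tests ``is $\mathrm{LCA}(i,j)=i$?'' are needed, so the LCA values on incomparable pairs are never consulted. The paper's Step~2, by contrast, does query them.

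\textbf{A minor point.} Your Step~2 is redundant, since the root is simply the unique node with $A(j)=\emptyset$. That is just as well, because the alternative ``$r=\mathrm{LCA}(i,j)$ taken over all pairs'' is loosely phrased. What you mean is the LCA of the whole vertex set, i.e.\ the $\preceq$-minimum of all LCA values.

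\textbf{What the paper's route buys.} It is an explicit algorithm that builds the tree level by level. Along the way it recovers the subtree decomposition at every node, which matches the natural root-downward reading of a kinematic hierarchy.
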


  \item \textbf{Semantic Loss.}  
  This loss encourages alignment between node features and semantic information.  
  We extract textual features of each joint name using CLIP, and apply a contrastive learning objective such that each joint’s node feature is pulled closer to its own name embedding, while being pushed away from those of other joints.
\end{itemize}

The detailed loss calculation and the proof of theorem are thoroughly introduced in the appendix.

\paragraph{Role of OwO within the Tokenizer.}
OwO is not an auxiliary component but the structural prior that enables topology-agnostic motion tokenization. 
Given a skeleton, OwO produces a set of joint-level identity embeddings
\[
F_{\text{node}}=\{h_j \in \mathbb{R}^d \}_{j=1}^J,
\]
which encode the semantic, geometric, and topological roles of all joints.

During TAT encoding, these structural embeddings are fused with the per-joint motion features:
\[
X_{t,j} = \mathrm{MLP}(\Theta_{t,j}) + \mathrm{Proj}(h_j),
\]
ensuring that the tokenizer interprets motions with respect to the correct anatomical meanings. 
This fusion allows the model to consistently understand motion patterns such as \emph{arm lifting}, 
\emph{spine bending}, or \emph{wing flapping}, even when applied to skeletons with different numbers of joints 
or distinct hierarchical structures. 

Importantly, OwO also plays a crucial role during decoding.  
Given a target skeleton, its OwO embeddings $F_{\text{node}}$ are repeated across the temporal dimension 
and concatenated to the quantized latent sequence:
\[
\tilde{Z}_{t,j} = [\, z_t \,\|\, h_j \,],
\]
where $z_t$ is the virtual-joint latent token at timestep $t$.  
This provides the decoder with a skeleton-specific \emph{template}, enabling it to reconstruct per-joint rotations 
consistent with the target morphology while preserving the motion dynamics encoded in the token sequence.  
Since OwO is computed once per skeleton and reused throughout the entire TAT pipeline, 
it serves as a lightweight yet expressive structural descriptor for universal motion reconstruction.

% 这怎么又讲一边，上面说过了感觉可以略过，除非打算介绍下静态数据
% \subsection{Experimental Setup}
% \label{sec:experiments_setup}
% \paragraph{Datasets and splits.}
% We evaluate on three BVH-centric sources: \textbf{HumanML3D}~\cite{guo2022generating}, \textbf{Objaverse-XL}~\cite{objaverseXL}, and \textbf{Truebones Zoo}~\cite{truebones_mocap}.
% Unless otherwise noted, we train \emph{one} model on the \emph{union} of training partitions from all three sources (\emph{UvU-train}) and evaluate on the \emph{union} of their test partitions (\emph{UvU-test}); per-dataset numbers are then computed by restricting evaluation to the corresponding source’s test subset. All methods (baselines and ours) are trained and evaluated under this unified protocol.

\subsection{TAT: Topology-Agnostic Tokenizer}
\label{sec:TAT}

\paragraph{Motivation: Why Topology-Agnostic Tokenization Is Challenging.}
Conventional VQ-based motion tokenizers quantize the feature at every joint and timestep, 
which implicitly assumes a fixed skeleton layout. 
This requirement fundamentally prevents them from handling heterogeneous skeletons with 
different joint counts and joint orderings, such as humans (22 joints), dogs (87 joints), birds (with folding wings), or dragons (over 120 joints).

To support motion tokenization across arbitrary skeletons, the quantization stage must be 
decoupled from the number of joints. 
Therefore, instead of quantizing joint-wise features, we introduce a \emph{virtual joint} 
that summarizes all joint features at each timestep into a topology-invariant representation. 
This design removes the dependence on the underlying kinematic structure and enables a truly universal discrete motion space.

\paragraph{Virtual Joint for Topology-Invariant Quantization.}
Instead of aggregating joint features via pooling, we introduce a learnable \emph{virtual joint} token, 
analogous to a classification token in Transformer encoders. 
At each (downsampled) timestep $t$, we augment the set of joint features 
$\{X_{t,j}\}_{j=1}^J$ with a virtual joint embedding $v_t^{(0)} \in \mathbb{R}^d$:
\[
\tilde{X}_{t} = \{X_{t,1}, \dots, X_{t,J}, v_t^{(0)}\}.
\]
This extended sequence is processed by $L$ stacked spatio-temporal blocks. 
Through the spatial attention, the virtual joint attends to all real joints and 
gradually accumulates a global summary of the motion at timestep $t$. 
We denote the virtual joint after the last block as $v_t^{(L)}$.

Crucially, only the virtual joint is fed into the RVQ quantizer:
\[
z_t = \mathrm{RVQ}(v_t^{(L)}),
\]
which yields a fixed-size discrete latent code independently of the number of real joints. 
During decoding, the quantized latent $z_t$ is injected back as the virtual joint token and combined with 
the OwO-conditioned joint features of the target skeleton to reconstruct per-joint rotations. 
This CLS-style virtual joint design enables topology-invariant motion tokenization without ever 
requiring a fixed joint grid.

Thanks to the explicitly extracted skeletal structure information from the Graph Embedder, we can simplify the motion reconstruction process under arbitrary topologies. Specifically, the inputs to this reconstruction step include:
\begin{itemize}
    \item The node-level structural embeddings from the Graph Embedder, denoted as
  $
  \mathbf{F}_{\text{node}} = \{\mathbf{h}_j \in \mathbb{R}^d\}_{j=1}^J,
  $
  where $J$ is the number of joints and $d$ is the embedding dimension;
    \item A motion sequence in our defined format, represented as
  $
  \mathbf{\Theta} \in \mathbb{R}^{T \times J \times 9},
  $
  where $T$ is the number of frames and each 9D vector encodes joint translation/rotation information as defined in  Section~\ref{sec:prob_def}.
\end{itemize}
By combining $\mathbf{F}_{\text{node}}$ with the per-frame motion features in $\Theta$, we can reconstruct motion tokens in a topology-agnostic manner without requiring explicit graph traversal during generation.

\paragraph{Difference from Prior Spatio-Temporal Transformers.}
Unlike prior motion Transformers that operate on fixed human skeletons, 
the proposed TAT introduces three key innovations:

\begin{itemize}
    \item \textbf{Graph-conditioned spatial attention.} 
    Each joint feature is modulated by its OwO embedding, 
    allowing the spatial attention module to reason over anatomical semantics rather than relying solely on joint indices.

    \item \textbf{Topology-invariant quantization via the virtual joint.}
    Only the virtual joint is quantized, 
    enabling a discrete representation that is independent of the number or ordering of joints.

    \item \textbf{Decoupled structure–motion representation.}
    The motion tokens contain no structural information; 
    the structure is injected only through OwO at encode and decode time. 
    This makes the latent motion code universally applicable to any skeleton.
\end{itemize}

Together, these innovations make TAT the first spatio-temporal tokenizer capable of 
reconstruction, transfer, and generation across arbitrary skeletal topologies.

\begin{table*}[h]
\centering
\caption{Reconstruction results on three datasets. Lower is better.}
\label{tab:main_recon}
\setlength{\tabcolsep}{6pt}
\small
\begin{tabular}{l rrr rrr rrr}
\toprule
\multirow{2}{*}{Method} &
\multicolumn{3}{c}{MPJPE $\downarrow$} &
\multicolumn{3}{c}{MPJPE (no trans.) $\downarrow$} &
\multicolumn{3}{c}{GeoDist $\downarrow$} \\
\cmidrule(lr){2-4}\cmidrule(lr){5-7}\cmidrule(lr){8-10}
& H3D & Obj-XL & Zoo
& H3D & Obj-XL & Zoo
& H3D & Obj-XL & Zoo \\
\midrule
T2M-GPT~\cite{zhang2023generating}          & 0.4203 & 0.2583 & 0.2271 & 0.1376 & 0.2128 & 0.0843 & 6.84$^\circ$  & 28.66$^\circ$ & 18.76$^\circ$ \\
Motion Streamer~\cite{xiao2025motionstreamer}  & 0.1961 & 0.2456 & 0.2261 & 0.0972 & 0.1963 & 0.0842 & 5.37$^\circ$  & 26.17$^\circ$ & 18.73$^\circ$ \\
TM2T~\cite{guo2022tm2t}             & 0.1411 & 0.1918 & 0.1434 & 0.0873 & 0.1565 & 0.0757 & 5.34$^\circ$  & 21.49$^\circ$ & 17.28$^\circ$ \\
\midrule
RVQ-VAE (zero pad) & 0.4729 & 0.2228 & 0.1762 & 0.2688 & 0.1817 & 0.1143 & 27.95$^\circ$ & 22.72$^\circ$ & 18.76$^\circ$ \\
\textbf{NEC w/ VQ}  & 0.3960 & 0.1840 & 0.1657 & 0.1395 & 0.1343 & 0.0828 & 7.78$^\circ$  & 17.40$^\circ$ & 16.19$^\circ$ \\
\textbf{NEC w/ RVQ} & \textbf{0.1084} & \textbf{0.0983} & \textbf{0.1008}
                   & \textbf{0.0588} & \textbf{0.0787} & \textbf{0.0635}
                   & \textbf{3.96$^\circ$} & \textbf{12.12$^\circ$} & \textbf{13.88$^\circ$} \\
\bottomrule
\end{tabular}
\end{table*}

\paragraph{Spatio-temporal Modeling.}
As shown in Figure~\ref{fig:tokenizer}, the encoder contains \( L \) spatio-temporal blocks. Each block consists of:
A \textbf{temporal module}, comprising a 1D convolution (with stride \( s \)) and a ResNet1D block for temporal abstraction; A \textbf{spatial transformer}, which models joint interactions at each timestep using multi-head self-attention.

At each layer, temporal downsampling reduces the sequence length by a factor of \( s \), resulting in an overall downsampling rate \( r = s^L \). The final feature is reshaped and passed through a \(1 \times 1\) convolution to produce quantized features \( \mathbf{Z}_{\text{feat}} \in \mathbb{R}^{T/r \times J \times W} \), from which token quantization is performed. The virtual joint token is extracted separately and used as a global summary feature.

The decoder mirrors the encoder structure in reverse. Given a quantized token sequence and the corresponding joint features, it performs: Concatenation of per-joint features and the global token; 
Spatial transformer operations for joint-wise refinement; Upsampling and reverse-dilated temporal convolutions to restore full temporal resolution.

The virtual joint token is removed before output. The decoder maps the output back to the original motion space \( \mathbb{R}^{T \times J \times 9} \) using a linear projection. This design allows the tokenizer to encode complex spatio-temporal patterns in a structure-aware yet data-efficient manner, and produce discrete tokens suitable for downstream generative modeling. Details can be found in the supplementary materials.

\subsection{Data Augmentation}
To enhance topological generalization, we propose a base pose randomization technique that preserves semantic content while expanding motion diversity. The method involves: (1) selecting a random frame as new rest pose, (2) computing global rotations and positions, (3) deriving new rest pose offsets, and (4) recalculating local rotations using key equations (see Appendix~\ref{sec:data_formul}). Our data augmentation generates physically plausible animations with semantically consistent motions, encouraging the model to prioritize semantic meaning and physical correctness.
\section{Experiments}
\label{sec:experiments}

\begin{table*}[t]
\centering
\caption{
Ablation study on pretraining and loss configurations.
We report results on three tasks across three datasets.
Lower is better for MPJPE, higher is better for R-Precision@1.
}
\label{tab:ablation_loss}
\setlength{\tabcolsep}{5.4pt}
\small
\begin{tabular}{c cccc ccc ccc ccc}
\toprule
\multirow{2}{*}{Pretrain} &
\multicolumn{4}{c}{Loss Setting} &
\multicolumn{3}{c}{Pose VAE (MPJPE)} &
\multicolumn{3}{c}{Motion VQ-VAE (MPJPE)} &
\multicolumn{3}{c}{Evaluator (R-Precision@1)} \\
\cmidrule(lr){2-5}
\cmidrule(lr){6-8}
\cmidrule(lr){9-11}
\cmidrule(lr){12-14}
& Offset & LCA & Dist. & Con.
& H3D & Obj-XL & Zoo
& H3D & Obj-XL & Zoo
& H3D & Obj-XL & Zoo \\
\midrule
No  & 0 & 0 & 0 & 0
& 0.0940 & 0.0618 & 0.0599
& 0.1458 & 0.1034 & \textbf{0.0995}
& 0.5355 & 0.1655 & 0.2756 \\

Yes & 1 & 0 & 0 & 0
& 0.0443 & 0.0442 & 0.0485
& 0.1151 & 0.1028 & 0.1122
& 0.4052 & 0.1310 & 0.2677 \\

Yes & 0 & 1 & 0 & 0
& 0.0530 & 0.0514 & 0.0585
& 0.1300 & 0.1051 & 0.1529
& 0.3913 & 0.1448 & 0.2913 \\

Yes & 0 & 0 & 1 & 0
& 0.0623 & 0.0412 & 0.0466
& 0.1138 & 0.1024 & 0.1146
& 0.3731 & 0.1793 & \textbf{0.3150} \\

Yes & 0 & 0 & 0 & 1
& 0.0399 & 0.0502 & \textbf{0.0433}
& 0.1126 & 0.1050 & 0.1006
& 0.5215 & 0.1172 & 0.2913 \\

Yes & 0 & 1 & 1 & 1
& 0.0822 & \textbf{0.0409} & 0.0485
& 0.1084 & \textbf{0.0983} & 0.1008
& \textbf{0.5713} & \textbf{0.2207} & 0.2992 \\

Yes & 1 & 0 & 1 & 1
& 0.0627 & 0.0424 & 0.0475
& \textbf{0.1073} & 0.1017 & 0.1359
& 0.5604 & 0.1586 & 0.3071 \\

Yes & 1 & 1 & 0 & 1
& 0.0780 & 0.0410 & 0.0460
& 0.1343 & 0.1098 & 0.1058
& 0.5580 & 0.1586 & 0.2283 \\

Yes & 1 & 1 & 1 & 0
& \textbf{0.0363} & 0.0411 & 0.0521
& 0.1567 & 0.1074 & 0.1003
& 0.5434 & 0.1793 & 0.2835 \\

Yes & 1 & 1 & 1 & 1
& 0.1009 & 0.0516 & 0.0661
& 0.1147 & 0.1088 & 0.1144
& 0.4976 & 0.1793 & 0.2992 \\
\bottomrule
\end{tabular}
\end{table*}

%主表换了这块就不太对了，根据主表重写一下,以及添加了motion streamer的引用
\paragraph{Baselines.}
We compare against three types of baselines.

(1) \emph{Human-centric text-to-motion models}, including T2M-GPT\cite{zhang2023generating}, Motion Streamer\cite{xiao2025motionstreamer}, and TM2T\cite{guo2022tm2t}.
Since these methods are originally designed for fixed human skeletons, we minimally adapt their input and output interfaces to support BVH motions with varying joint sets, while keeping their core model architectures and training objectives unchanged.
Specifically, all motions are zero-padded to the required joint layout of each model.

(2) \emph{Padding-based tokenizers}.
We evaluate RVQ-VAE trained on zero-padded BVH sequences under a single canonical skeleton.
We do not include a separate zero-padded VQ baseline, as it is functionally equivalent to T2M-GPT.

(3) \emph{Our variants}, including NEC w/ VQ and NEC w/ RVQ, which share the same architecture and differ only in the quantization scheme.
All baselines are trained and evaluated under the same unified protocol unless otherwise specified.

% \paragraph{Baselines.}
% We compare against canonical \textbf{VQVAE}~\cite{van2017neural} and \textbf{RVQVAE}~\cite{guo2023momask} tokenizers trained on padded/masked sequences to a max-joint layout (single canonical skeleton).
% Our method \textbf{NEC} combines the \textbf{OwO} (ontology-aware skeletal graph encoder) with the \textbf{TAT} tokenizer to produce topology-agnostic motion tokens on arbitrary BVH skeletons.

\paragraph{Implementation details.}
Unless stated, OwO uses 8 graph attention blocks with 512 latent dimension while TAT uses $3$ spatio-temporal blocks. Each spatio-temporal block contains 2 convolution layers and 2 spatial transformer encoder layers. The used quantizer is a 6-layer  codebooks (size 1024).
We train with AdamW, cosine LR, gradient clipping, and random rest-pose augmentation (Sec.~\ref{sec:methods}). The tokenizer is trained with 32 Ascend 910. The whole training process contains around 24k iterations. Initial learning rate is 2e-4 and is decreased to 2e-5 during the last 4k iterations.

\paragraph{Metrics.}
We report four metrics throughout:
\textbf{MPJPE} $\downarrow$ (root-aligned, joint-set-agnostic),
\textbf{FID} $\downarrow$ (computed on the same retrieval backbone as prior work),
\textbf{GeoDist} $\downarrow$ (mean geodesic distance of joint rotations),
and \textbf{R-Precision@$\{1,2,3\}$} $\uparrow$ for text--motion retrieval.

\paragraph{Other applications of Graph Embedder.}
For understanding BVH motion, accurately extracting features for each joint is crucial. 
Beyond the tokenizer, we also experimented with two contrastive learning models---PoseVAE and Text–Motion Evaluator---to validate the effectiveness of our proposed Graph Embedder. 
Both models share a similar overall structure with the tokenizer, but with key differences:  
PoseVAE operates on single frames and employs a VAE for modeling, 
whereas the Evaluator applies a transformer along the temporal dimension to extract a unified feature from the motion sequence, 
which is then used to compute similarity with textual representations.

\subsection{Main Results: Reconstruction}
\label{sec:main_results}

% 把那个结构消融放进去了，RVQ的就不放表了吧？
\begin{table*}[t]
\centering
\caption{
OwO structural ablation on reconstruction accuracy.
Lower is better.
}
\label{tab:owo_struct_ablation}
\setlength{\tabcolsep}{6pt}
\small
\begin{tabular}{l rrr rrr rrr}
\toprule
Method
& \multicolumn{3}{c}{MPJPE $\downarrow$}
& \multicolumn{3}{c}{MPJPE (no trans.) $\downarrow$}
& \multicolumn{3}{c}{GeoDist $\downarrow$} \\
\cmidrule(lr){2-4}\cmidrule(lr){5-7}\cmidrule(lr){8-10}
& H3D & Obj-XL & Zoo
& H3D & Obj-XL & Zoo
& H3D & Obj-XL & Zoo \\
\midrule
Learnable Query
& 0.1475 & 0.1585 & 0.1149
& 0.0751 & 0.1295 & 0.0737
& 5.04$^\circ$ & 19.44$^\circ$ & 16.37$^\circ$ \\

+ Joint Name
& 0.1759 & 0.1434 & 0.1324
& 0.0755 & 0.1005 & 0.0645
& 4.93$^\circ$ & 15.80$^\circ$ & 15.15$^\circ$ \\

\textbf{Full OwO (ours)}
& \textbf{0.1084} & \textbf{0.0983} & \textbf{0.1008}
& \textbf{0.0588} & \textbf{0.0787} & \textbf{0.0635}
& \textbf{3.96$^\circ$} & \textbf{12.12$^\circ$} & \textbf{13.88$^\circ$} \\
\bottomrule
\end{tabular}
\end{table*}

\subsection{Ablations}
\label{sec:ablations}

\paragraph{Summary.}
Table~\ref{tab:main_recon} summarizes reconstruction performance across three heterogeneous datasets: HumanML3D~\cite{guo2022generating}, Objaverse-XL~\cite{objaverseXL}, and Truebones Zoo~\cite{truebones_mocap}.
For T2M-GPT, Motion Streamer, TM2T, and the traditional RVQVAE baseline, we pad all skeletons to the fixed joint layout required by these models, ensuring compatibility with their human-centric architectures.

Across all three datasets, a consistent trend emerges.
Human-centric models (T2M-GPT~\cite{zhang2023generating}, Motion Streamer~\cite{xiao2025motionstreamer}, TM2T~\cite{guo2022tm2t}) perform reasonably on HumanML3D but degrade significantly on Objaverse-XL and Zoo, where joint counts, limb structures, and bone-length statistics differ widely. This shows that models assuming fixed human topology cannot generalize to heterogeneous skeletons.
Padding-based RVQ-VAE baselines perform even worse, with high MPJPE(no-trans) and GeoDist indicating that zero-padding introduces structural ambiguity and prevents accurate rotational reconstruction.

By contrast, NEC overcomes these limitations through an ontology-aware structural prior (OwO) and topology-invariant quantization (TAT). Even NEC w/ VQ surpasses all padding-based VQ baselines, showing the benefits of structure-aware encoding.
The full NEC w/ RVQ achieves the strongest results across all datasets and metrics, reducing MPJPE on HumanML3D by nearly 2× compared to TM2T, and delivering large improvements on Obj-XL and Zoo. Its consistently lowest GeoDist further confirms superior rotational accuracy and cross-topology fidelity.

Overall, the results demonstrate that topology-aware conditioning combined with residual quantization is crucial for precise and structurally consistent motion reconstruction across diverse skeletal morphologies.

\paragraph{OwO pretraining: reconstruction + retrieval (Table~2).}
We ablate the effect of \textbf{OwO} pretraining by toggling its auxiliary objectives: Adjacent Offset regression (Offset), LCA/topology prediction (LCA),  Distance regression on any pairs (Dist.), and name-text contrastive learning (Con.). Here Offset is a special case of Dist.

\begin{figure*}[t]
\centering
\includegraphics[width=\textwidth]{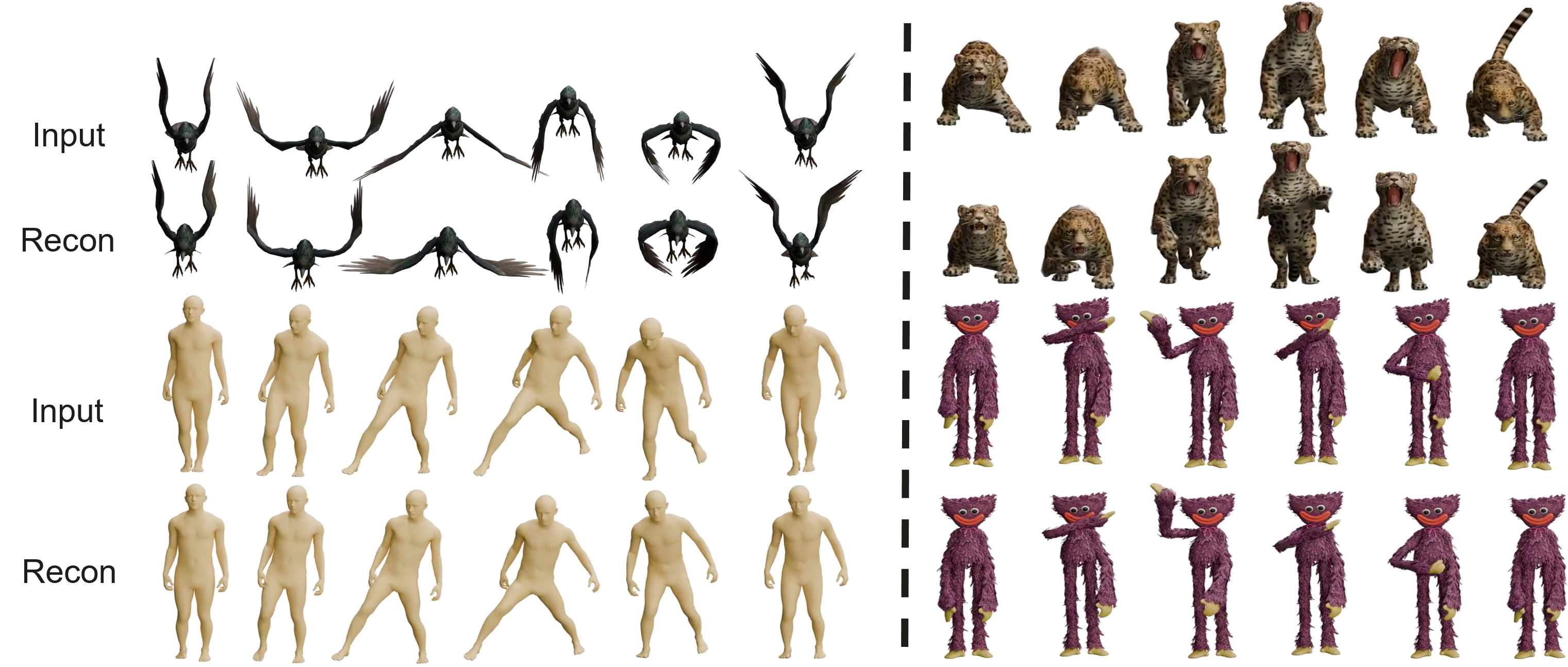}
\caption{Qualitative reconstruction results comparing NEC with ground truth on Objaverse-XL and Truebones.}
\label{fig:qual_recon}
\end{figure*}
%缩一点点
\noindent\textbf{Findings.}
(1) Overall, combining the three loss functions---\emph{LCA}, \emph{Dist.}, and \emph{Con.}---achieves the best performance across all tasks.  
(2) The \emph{Offset} task, which predicts offsets between neighboring joints, is relatively simple and provides limited benefit for representation learning; in fact, it may even have a negative effect. 
(3) The \emph{LCA} task substantially improves performance on the Zoo dataset, likely due to its highly diverse skeletal structures, indicating that learning topology enhances cross-skeleton generalization.

% \noindent\textbf{Findings.}
% (1) Combining all three losses (\emph{LCA}, \emph{Dist.}, \emph{Con.}) yields the best overall performance. 
% (2) The \emph{Offset} objective is simple and offers little benefit, and can even hurt representation learning. 
% (3) \emph{LCA} notably improves performance on Zoo, likely because its skeletons are highly diverse, suggesting topology learning enhances cross-skeleton generalization.

% \paragraph{OwO structural ablation.} To further isolate the contribution of structural priors in OwO, we compare three variants: (i) a single learnable query without joint semantics or graph attention, (ii) a variant augmented with joint-name semantics only, and (iii) the full OwO with ontology-aware graph attention. As shown in Table~\ref{tab:owo_struct_ablation}, removing graph-structure reasoning leads to a consistent degradation across all datasets and metrics. Joint-name semantics alone provide limited improvement, 
% while the full OwO achieves the best performance by a clear margin, demonstrating that both semantic cues and topological reasoning are essential for topology-agnostic motion representation.

\paragraph{OwO structural ablation.}
We compare: (i) a single learnable query without joint semantics or graph attention, (ii) a variant augmented with joint-name semantics only, and (iii) the full OwO with ontology-aware graph attention.
Table~\ref{tab:owo_struct_ablation} shows that removing graph reasoning consistently degrades performance across datasets and metrics. Joint-name semantics alone offer limited gains, whereas full OwO performs best, indicating that both factors are crucial for topology-agnostic motion representation.

\subsection{Qualitative Results}
\label{sec:qual}

% \paragraph{Reconstruction vs. baselines.}
% Fig.~\ref{fig:qual_recon} shows side-by-side BVH pose sequences for NEC, VQVAE, and RVQVAE on HumanML3D/Objaverse-XL/Truebones.
% NEC better preserves limb phasing and contact timing, especially on non-human skeletons.

Fig.~\ref{fig:qual_recon} shows reconstruction results from NEC. The species including a standard SMPL body mesh, a humanoid character, a quadruped mammal and a bird, which covers a large range of species used in our benchmark. Our tokenizer can basically recover the motion tendency with sufficient motion details. Admittedly, certain fine-grained pose details cannot yet be reproduced with high precision, leaving room for further improvement.

% 如果弱化，这边应该也省了
% \paragraph{Motion transfer through different species.}
% We demonstrate \textbf{motion transfer} (source tokens + target OwO) across arbitrary topologies (Fig.~\ref{fig:qual_token_ops}).

% We demonstrate \textbf{motion transfer} (source tokens + target OwO) across arbitrary topologies.
% \emph{Crucially, this requires no task-specific fine-tuning or auxiliary heads}: because NEC produces \emph{topology-agnostic} tokens and the decoder is \emph{OwO-conditioned}, a single trained model supports \textbf{zero-shot transfer} by decoding source token sequences under different target OwOs (morphology swap).
% This enables direct motion transfer between species with diverse skeletons while maintaining temporal coherence.

% \paragraph{Plug-and-play generation.}
% We plug \textbf{NEC} tokens into a token-based generator (e.g., MoMask) for \textbf{any-skeleton} text-to-motion), using only target OwO to condition morphology.
% More generally, \emph{any} token-based generator can be trained or fine-tuned on NEC tokens without architectural changes, treating NEC codes as the discrete vocabulary and injecting morphology at decode time. Please refer to the demo video for the generation results.

Benefiting from the design that uses the skeleton as an additional conditioning signal, our method can be applied to motion transfer. Specifically, we feed motion sequence from species A into the encoder to obtain a set of discrete tokens, and then pass these tokens to the decoder together with the conditioning signal of species B, thereby enabling cross-species motion transfer. Another use case is to combine our approach with classic discrete-token-based motion generation methods, enabling text-driven motion generation for any species. Please refer to the supplementary material for detailed experiments on both parts.

% \begin{figure*}[t]
% \centering
% \includegraphics[width=\textwidth]{images/transfer_add_line}
% \caption{Qualitative motion transfer results across different skeletons and object categories.}
% \label{fig:qual_token_ops}
% \end{figure*}
% \input{sections/6_applications}
\section{Conclusions}
\label{sec:conclusions}

% We presented a unified framework for motion representation learning with three key contributions.  
% First, the \emph{Ontology-aware Skeletal Graph Encoder} leverages graph embeddings and self-supervised losses to pre-train on rigged 3D data, yielding meta-representations that capture topology, posture, and semantics.  
% Second, the \emph{Topology-Agnostic Tokenizer} converts arbitrary BVH skeletons into compact, skeleton-agnostic tokens, enabling diverse downstream applications.  
% Third, we established a large-scale \emph{BVH-centric benchmark} (47{,}807 sequences) covering heterogeneous species and skeletons, supporting standardized evaluation for reconstruction, retrieval, and distributional quality. These contributions together provide a foundation for topology-generalized motion modeling and evaluation in BVH format.

We presented a unified framework for topology-generalized motion representation learning.
Our approach integrates an \emph{Ontology-aware Skeletal Graph Encoder} for learning structural and semantic priors from rigged 3D data, a \emph{Topology-Agnostic Tokenizer} that converts arbitrary BVH skeletons into compact discrete tokens, and a large-scale \emph{BVH-centric benchmark} (47{,}807 sequences) covering heterogeneous species and skeletons.
Together, these components establish a practical foundation for reconstruction, retrieval, and generation across diverse skeletal topologies in BVH format.

\section*{Impact Statement}
This paper presents work whose goal is to advance the field of machine learning. There are many potential societal consequences of our work, none of which we feel must be specifically highlighted here.

% ================= References =================
% \clearpage
\bibliography{icml2026}
\bibliographystyle{icml2026}
% In the unusual situation where you want a paper to appear in the
% references without citing it in the main text, use \nocite

% ================= Appendix =================
\newpage
\appendix
\onecolumn
\appendix

{\LARGE\sc {Appendix for NECromancer}\par}

\section*{LLM Usage}
A large language model was used as a writing assistant to improve the clarity and readability of the manuscript. 
In addition, a vision-language model was employed to assist in data preprocessing and filtering. 
All final research decisions, data selection criteria, and methodological contributions were made and verified by the authors.

\section{Details of BVH Representation}

\begin{figure*}[t]
  \centering
  \includegraphics[width=\textwidth]{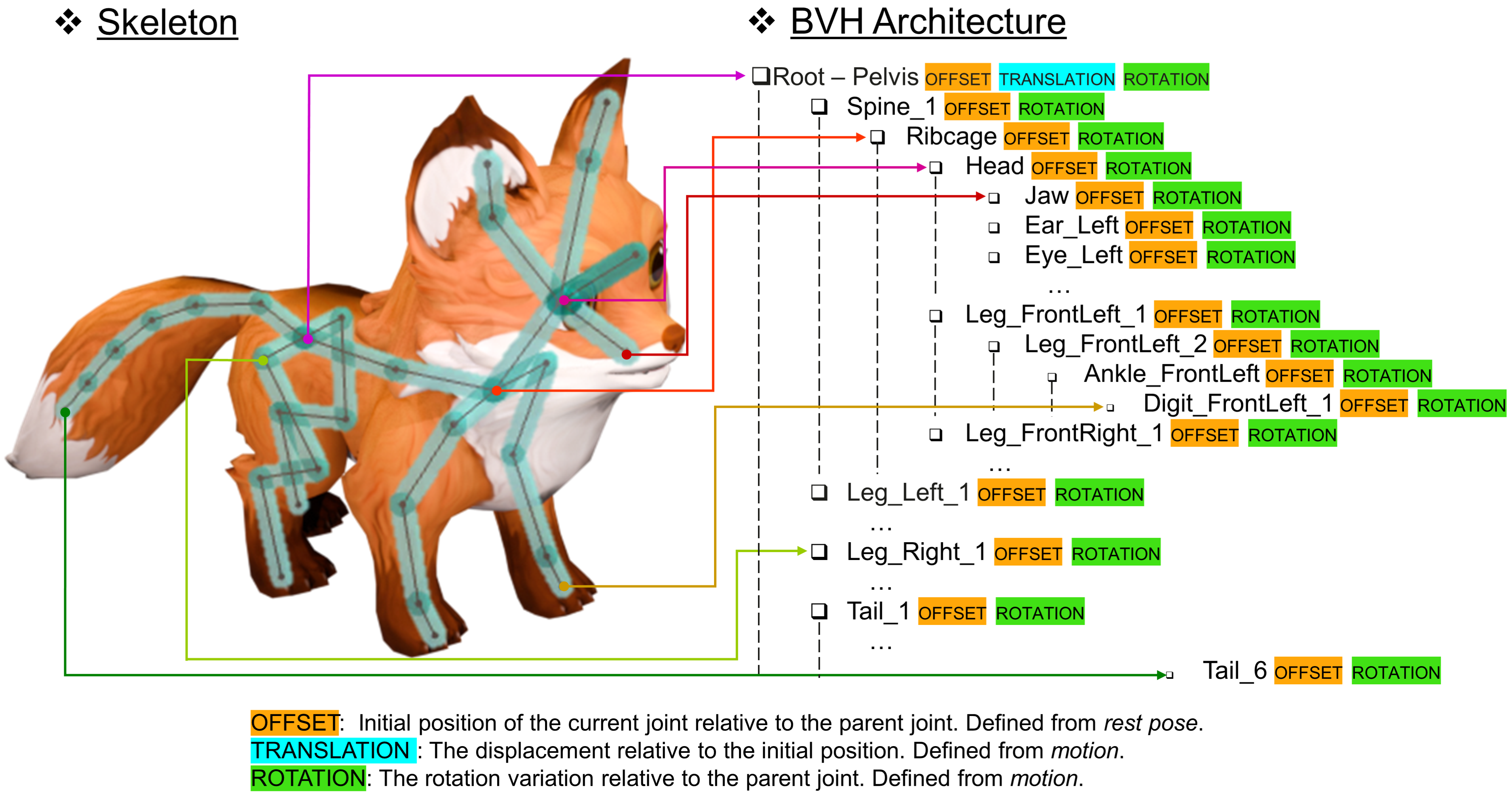}
  \caption{
  \textbf{Overview of the BVH motion format.}
  BVH encodes a skeleton as a joint hierarchy with fixed rest-pose offsets (\texttt{OFFSET}), and represents motion as per-frame channels (\texttt{TRANSLATION} for the root and \texttt{ROTATION} for all joints) applied in hierarchical order.
  }

  \label{fig:bvh}
\end{figure*}

To enable motion generation across diverse skeletal structures, we adopt BVH (Biovision Hierarchy) as the unified representation format for all skeleton-based animations in our framework. BVH is a widely used hierarchical motion representation format that encodes both the skeletal topology (rest pose) and temporal motion trajectories in a compact and interpretable structure, as shown in Fig.~\ref{fig:bvh}. Its compatibility with commercial animation software and motion datasets makes it an ideal choice for bridging learning-based motion generation with practical deployment.

In this section, we provide a detailed explanation of the two critical components of the BVH format: the rest pose, which defines the static skeleton configuration, and the motion representation, which captures dynamic joint transformations over time.

\subsection{Rest Pose}

The rest pose in a BVH file encodes the skeleton hierarchy, which defines the parent-child relationship between joints, as well as the offset vectors between them. Each joint is represented by its name, position relative to its parent, and its degrees of freedom (DOF), such as rotation along the X, Y, and Z axes. The hierarchy is generally rooted at the Hips joint or an equivalent base (e.g., the Pelvis joint), and traverses downward through limbs and extremities.

Importantly, the rest pose serves as the topological and spatial reference for interpreting all subsequent motion data. In our framework, we treat the rest pose as a graph, where nodes correspond to joints and edges reflect the skeletal hierarchy. This representation allows us to extract structural embeddings for each skeleton, facilitating the generalization of motion generation across arbitrary topologies.
To ensure consistency across heterogeneous sources, we standardize joint naming under a consistent convention.
% normalize all skeletons to a standard coordinate system and unify joint naming with convention. 

\subsection{Motion Representation}

The motion section of a BVH file records the transformations per frame applied to each joint over time. Each frame typically contains a set of scalar values corresponding to the DOF specified in the rest pose, usually in the form of Euler angles and root joint translations. These values are organized in a flat sequence for each time step, but semantically correspond to articulated motion governed by the skeletal hierarchy.

To handle skeletons of varying topology and DOF, we dynamically build the hierarchy graph for each BVH file based on its rest pose. This ensures that all motion sequences, regardless of their original structure, can be consistently represented and processed under a unified format.

\section{Details of Data Preprocessing}
To ensure a consistent and physically meaningful skeletal representation across the heterogeneous HumanML3D, Truebones Zoo and Objaverse-XL datasets, we apply a unified and carefully designed preprocessing pipeline.

\textbf{(1) Correction of invalid or awkward root joint definitions.}
Several skeletons in both datasets adopt unconventional root placements (e.g., a dummy root joint set outside of the subject body, or root incorrectly set to a shoulder or spine joint). 
Such configurations violate common skeletal conventions and introduce instability for models relying on hierarchical transformations. 
We identify these problematic cases and relocate the root to an anatomically meaningful position (typically the pelvis), reconstructing the hierarchy so that all parent–child relationships follow consistent human and animal body kinematics.

\textbf{(2) Reassignment of global translations to the root joint.}
In the raw data, some sequences embed joint-level translations on intermediate nodes, leading to frame-dependent bone-length drift and violating rigid-body kinematic constraints. 
To restore physical consistency, we transfer all global motion to the root joint and remove per-joint translations from all other nodes. 
This guarantees that bone lengths remain constant across frames and that only the root carries global displacement, while the rest of the skeleton expresses purely rotational motion.

\textbf{(3) Removal of sequences with abnormal bone lengths.}
A subset of sequences contains extremely elongated or corrupted bones due to tracking errors or incorrect parameter export. 
Since such cases break kinematic plausibility and cannot be reliably corrected without strong assumptions, we exclude sequences whose bone lengths exceed a dataset-dependent threshold relative to their normalized diameter, effectively filtering out physically invalid examples.

\textbf{(4) Semantic standardization of joint names using a vision-language model.}
Joint names provided in these datasets are highly inconsistent and often ambiguous (e.g., ``joint1'', ``bone\_02'', ``arm.L'', or dataset-specific conventions).
To achieve semantic uniformity, we employ a vision-language model (VLM) to map \emph{raw joint names} to a canonical semantic vocabulary (including non-human anatomical terms when applicable).
This step not only harmonizes naming across datasets, but also ensures consistent semantic interpretation for downstream tasks such as retargeting, pose comparison, and learned conditioning.

\textbf{(5) Scale normalization based on skeletal diameter.}
The raw datasets contain skeletons defined at widely different scales, resulting in inconsistent geometry and bone-length statistics. 
For each sequence, we compute the skeleton diameter, defined as the length of the longest kinematic chain from the root to any leaf joint. 
This measure offers a robust scale descriptor that is less sensitive to local bone-length noise. 
We uniformly scale the entire sequence using this diameter, ensuring that all skeletons across datasets share comparable global scale while preserving their relative proportions.

After applying the above stages, every sequence is represented by a single, coherent skeletal tree with fixed bone lengths, a unified joint naming scheme, and global motion encoded exclusively at the root. 
All redundant, duplicated, or semantically meaningless joints are removed. 
This preprocessing ensures topological and geometric consistency across datasets and enables fair and stable learning for all subsequent modules.

% \begin{table*}[t]
% \centering
% \caption{Generation results on three datasets. }
% \label{tab:main_gene}
% \setlength{\tabcolsep}{6pt}
% \small
% \begin{tabular}{l rrr rrr rrr}
% \toprule
% \multirow{2}{*}{Method} &
% \multicolumn{3}{c}{R-Precision@1 $\uparrow$} &
% \multicolumn{3}{c}{R-Precision@3 $\uparrow$} &
% \multicolumn{3}{c}{FID $\downarrow$} \\
% \cmidrule(lr){2-4}\cmidrule(lr){5-7}\cmidrule(lr){8-10}
% & H3D & Obj-XL & Zoo
% & H3D & Obj-XL & Zoo
% & H3D & Obj-XL & Zoo \\
% \midrule
% T2M-GPT~\cite{zhang2023generating}          & 0.0000 & 0.0000 & 0.0000 & 0.0000 & 0.0000 & 0.0000 & 0.0000  & 0.0000 & 0.0000 \\
% Motion Streamer~\cite{xiao2025motionstreamer}  & 0.0000 & 0.0000 & 0.0000 & 0.0000 & 0.0000 & 0.0000 & 0.0000  & 0.0000 & 0.0000 \\
% TM2T~\cite{guo2022tm2t}             & 0.0391 & 0.1513 & 0.0000 &  0.1035 & 0.3601 &  0.0000  & 1.2510 & 1.031 & 0.0000 \\
% \midrule
% RVQ-VAE (zero pad) & 0.0000 & 0.0000 & 0.0000 & 0.0000 & 0.0000 & 0.0000 & 0.0000 & 0.0000 & 0.0000 \\
% \textbf{NEC w/ VQ}  & 0.0000 & 0.0000 & 0.0000 & 0.0000 & 0.0000 & 0.0000 & 0.0000  & 0.0000 & 0.0000 \\
% \textbf{NEC w/ RVQ} & \textbf{0.0000} & \textbf{0.0000} & \textbf{0.0000}
% & \textbf{0.0000} & \textbf{0.0000} & \textbf{0.0000}
% & \textbf{0.0000} & \textbf{0.0000} & \textbf{0.0000} \\
% \bottomrule
% \end{tabular}
% \end{table*}

\section{Implementation Details}
\label{sec:app_imp}

\subsection{Data Augmentation: Mathematical Formulations}
\label{sec:data_formul}
The proposed data augmentation method follows these mathematical formulations:

1. Global rotation calculation:
\begin{equation}
\mathbf{r}_{t_0, i}^{\text{global}} = \mathbf{r}_{t_0, \text{parent}[i]}^{\text{global}} \cdot \mathbf{r}_{t_0, i}^{\text{local}} \tag{1}
\end{equation}

2. Global position calculation:
\begin{equation}
\mathbf{p}_{t_0, i}^{\text{global}} = \mathbf{p}_{t_0, \text{parent}[i]}^{\text{global}} + \mathbf{r}_{t_0, \text{parent}[i]}^{\text{global}} \cdot \mathbf{o}_{i} \tag{2}
\end{equation}

3. Rest pose offset derivation:
\begin{equation}
\mathbf{\bar{o}}_{i} = \mathbf{p}_{t, i}^{\text{global}} - \mathbf{p}_{t, \text{parent}[i]}^{\text{global}} \tag{3}
\end{equation}

4. Local rotation re-calculation:
\begin{equation}
\mathbf{\bar{r}}_{t, i}^{\text{local}} = (\mathbf{\bar{r}}_{t, \text{parent}[i]}^{\text{global}})^{-1} \cdot \mathbf{r}_{t, i}^{\text{global}} \cdot (\mathbf{r}_{t_0, i}^{\text{global}})^{-1} \tag{4}
\end{equation}

With such data augmentation process, we synthesize a large amount of animations with reasonable base poses, which is more physically plausible compared to random rotation jittering. By training on many animations that are different in numeric representations but the same on semantic meanings, the model is strongly encouraged to focus more on the semantic meaning and physical correctness of the animations.

\subsection{Graph Embedder}

\subsubsection{Graph Initialization}

Given a rest pose skeleton \( \mathcal{S} = (\mathcal{J}, \mathcal{E}) \), where \( \mathcal{J} \) is the set of joints and \( \mathcal{E} \subseteq \mathcal{J} \times \mathcal{J} \) represents parent-child edges, we construct a directed graph where each joint is treated as a node.

To enable graph-based message passing, we initialize three types of features: node features, edge features, and a special global node.

\subsubsection{Node Feature Initialization}

Each joint \( j \in \mathcal{J} \) is annotated with a semantic name. We use a pretrained CLIP text encoder to extract a 512-dimensional embedding \( \mathbf{e}_j^{\text{text}} \in \mathbb{R}^{512} \), which is projected to the graph dimension:
\begin{equation}
    \mathbf{h}_j^{(0)} = \text{FC}\left( \mathbf{e}_j^{\text{text}} \right) \in \mathbb{R}^{d}.
\end{equation}

\subsubsection{Global Node Initialization}

We introduce a special node with name "global" and extract its CLIP embedding in the same manner:
\begin{equation}
    \mathbf{h}_{\text{global}}^{(0)} = \text{FC} \left( \mathbf{e}_{\text{global}}^{\text{text}} \right) \in \mathbb{R}^{d}.
\end{equation}

The global node is connected bidirectionally to all other joints:
\[
\mathcal{E}_{\text{global}} = \left\{ (\text{global} \rightarrow j),\ (j \rightarrow \text{global}) \mid j \in \mathcal{J} \right\}.
\]

These edges are treated identically to regular edges during attention, allowing the global node to gather holistic context from the graph while also distributing high-level signals.

\subsubsection{Edge Feature Initialization}

For each directed edge \( (i \rightarrow j) \in \mathcal{E} \cup \mathcal{E}_{\text{global}} \), we define a bidirectional edge feature.

\textbf{Forward edge:}
\begin{equation}
    \mathbf{e}^{(0)}_{i \rightarrow j} = \text{FC} \left(
        \left[
            \mathbf{h}_i^{(0)} \, \Vert \,
            \mathbf{h}_j^{(0)} \, \Vert \,
            \boldsymbol{\phi}(\mathbf{o}_{i \rightarrow j})
        \right]
    \right),
\end{equation}

\textbf{Backward edge:}
\begin{equation}
    \mathbf{e}^{(0)}_{j \rightarrow i} = \text{FC} \left(
        \left[
            \mathbf{h}_j^{(0)} \, \Vert \,
            \mathbf{h}_i^{(0)} \, \Vert \,
            \boldsymbol{\phi}(-\mathbf{o}_{i \rightarrow j})
        \right]
    \right),
\end{equation}
where \( \mathbf{o}_{i \rightarrow j} \in \mathbb{R}^3 \) is the offset from joint \( i \) to joint \( j \), \(\Vert \) denotes the concatenation operation, and \( \boldsymbol{\phi}(\cdot) \in \mathbb{R}^{3d} \) is a sinusoidal embedding per dimension.

For global-to-joint edges, we set \( \mathbf{o}_{i \rightarrow j} = \mathbf{0} \), but retain the learned joint names for each endpoint.

\begin{table*}[t]
\centering
\caption{
Ablation study of pretraining and loss configurations on \textbf{motion--text transfer retrieval} (top-$K$ R-Precision).
}

\label{tab:ablation_transfer_loss}
\setlength{\tabcolsep}{5.4pt}
\small
\begin{tabular}{c cccc ccc ccc ccc}
\toprule
\multirow{2}{*}{Pretrain} &
\multicolumn{4}{c}{Loss Setting} &
\multicolumn{3}{c}{R-Precision@1 $\uparrow$} &
\multicolumn{3}{c}{R-Precision@2 $\uparrow$} &
\multicolumn{3}{c}{R-Precision@3 $\uparrow$} \\
\cmidrule(lr){2-5}
\cmidrule(lr){6-8}
\cmidrule(lr){9-11}
\cmidrule(lr){12-14}
& Offset & LCA & Dist. & Con.
& H3D & Obj-XL & Zoo
& H3D & Obj-XL & Zoo
& H3D & Obj-XL & Zoo \\
\midrule
No  & 0 & 0 & 0 & 0
& 0.1973 & 0.1211 & 0.0840
& 0.3125 & 0.2012 & 0.1484
& 0.3906 & 0.2617 & 0.2070 \\

Yes & 1 & 0 & 0 & 0
& 0.4688 & 0.1563 & 0.0801
& \textbf{0.6699} & 0.2441 & 0.1660
& 0.7500 & 0.3184 & \textbf{0.2129} \\

Yes & 0 & 1 & 0 & 0
& 0.2637 & 0.1406 & 0.0781
& 0.4063 & 0.2324 & 0.1406
& 0.5020 & 0.3203 & 0.1934 \\

Yes & 0 & 0 & 1 & 0
& 0.0645 & 0.1406 & 0.0586
& 0.1230 & 0.2324 & 0.0938
& 0.1758 & 0.3203 & 0.1367 \\

Yes & 0 & 0 & 0 & 1
& 0.0879 & 0.0762 & 0723
& 0.1523 & 0.1426 & 0.1348
& 0.2207 & 0.2070 & 0.1797 \\

Yes & 0 & 1 & 1 & 1
& 0.4727 & 0.1523 & \textbf{0.0898}
& 0.6484 & 0.2539 & \textbf{0.1504}
& \textbf{0.7520} & 0.3320 & 0.1934 \\

Yes & 1 & 0 & 1 & 1
& \textbf{0.4902} & 0.1465 & 0.0684
& 0.6387 & 0.2480 & 0.1191
& 0.7246 & 0.3281 & 0.1797 \\

Yes & 1 & 1 & 0 & 1
& 0.2363 & 0.1172 & 0.0801
& 0.3633 & 0.2051 & 0.1270
& 0.4355 & 0.2852 & 0.1660 \\

Yes & 1 & 1 & 1 & 0
& 0.2266 & 0.1406 & 0.0820
& 0.3809 & 0.1934 & 0.1465
& 0.4746 & 0.2656 & 0.1992 \\

Yes & 1 & 1 & 1 & 1
& 0.4824 & \textbf{0.2012} & 0.0840
& 0.6348 & \textbf{0.3242} & 0.1406
& 0.7246 & \textbf{0.3945} & 0.2070 \\
\bottomrule
\end{tabular}
\end{table*}

\subsubsection{Graph Encoder Block}

The graph is processed by \( L \) stacked Graph Encoder Blocks, each consisting of a Graph Attention Layer and a Feed-Forward Network (FFN).

\subsubsection{Graph Attention Layer}

Let \( \mathbf{h}_j^{(l)} \) be the feature of node \( j \) at layer \( l \). We compute:
\begin{equation}
    \tilde{\mathbf{h}}_j^{(l+1)} = \sum_{k \in \mathcal{N}(j)} \alpha_{k \rightarrow j}^{(l)} \cdot \mathbf{v}_{k \rightarrow j}^{(l)},
\end{equation}
where \( \alpha_{k \rightarrow j}^{(l)} \in [0,1] \) is the attention weight, and \( \mathbf{v}_{k \rightarrow j}^{(l)} \in \mathbb{R}^d \) is the edge-conditioned message:
\begin{equation}
    \alpha_{k \rightarrow j}^{(l)} = \frac{\exp \left( \text{LeakyReLU} \left( \mathbf{a}^\top 
    \left[ 
        \mathbf{W}_{\mathrm{node}}  \mathbf{h}_j^{(l)} \, \Vert \,
        \mathbf{W}_{\mathrm{node}}  \mathbf{h}_k^{(l)} \, \Vert \,
        \mathbf{e}_{k \rightarrow j}^{(l)}
    \right] \right) \right)}{
        \sum_{k' \in \mathcal{N}(j)} \exp \left( \text{LeakyReLU} \left( \mathbf{a}^\top 
        \left[ 
            \mathbf{W}_{\mathrm{node}}  \mathbf{h}_j^{(l)} \, \Vert \,
            \mathbf{W}_{\mathrm{node}}  \mathbf{h}_{k'}^{(l)} \, \Vert \,
            \mathbf{e}_{k' \rightarrow j}^{(l)}
        \right] \right) \right)},
\end{equation}
\begin{equation}
    \mathbf{v}_{k \rightarrow j}^{(l)} = \mathbf{W}_v \left[ \mathbf{h}_k^{(l)} \, \Vert \, \mathbf{e}_{k \rightarrow j}^{(l)} \right].
\end{equation}
\begin{equation}
    \tilde{\mathbf{e}}_{k \rightarrow j}^{(l
    +1)} = \mathbf{e}_{k \rightarrow j}^{(l
    )} + \mathbf{W}_{\mathrm{src}}  \mathbf{W}_{\mathrm{node}}  \mathbf{h}_j^{(l)} +
            \mathbf{W}_{\mathrm{tgt}}  \mathbf{W}_{\mathrm{node}}  \mathbf{h}_{k}^{(l)}.
\end{equation}

\subsubsection{Feed-Forward Network}

We apply an FFN to each node independently:
\begin{align}
    \mathbf{z}_j^{(l+1)} &= \text{ReLU}\left( \mathbf{W}_2 \cdot \text{ReLU}(\mathbf{W}_1 \cdot \tilde{\mathbf{h}}_j^{(l+1)} + \mathbf{b}_1) + \mathbf{b}_2 \right), \\
    \mathbf{h}_j^{(l+1)} &= \text{LayerNorm} \left( \mathbf{h}_j^{(l)} + \mathbf{z}_j^{(l+1)} \right),
\end{align}
where \( \mathbf{W}_1 \in \mathbb{R}^{d \times d_{\text{ff}}}, \mathbf{W}_2 \in \mathbb{R}^{d_{\text{ff}} \times d} \).

We perform similar transformations to obtain the updated edge feature \( \mathbf{e}_{k \rightarrow j}^{(l+1)} \) from \( \tilde{\mathbf{e}}_{k \rightarrow j}^{(l+1)} \).

\subsection{Training Objectives of Graph Embedder}

The detail of our proposed three categories of self-supervision tasks are listed as below:
\begin{itemize}
    \item \textbf{Geometric Task — Distance Regression.}  
    For each pair of joints \( (j, k) \), we regress the offset vector from joint \( j \) to joint \( k \). Let \( \mathbf{h}_j \in \mathbb{R}^d \) and \( \mathbf{h}_k \in \mathbb{R}^d \) be the node embeddings after the final graph encoder layer. We predict:
    \begin{equation}
        \hat{\mathbf{o}}_{jk} = \mathbf{W}_{\text{geo}} \cdot \text{ReLU}(\text{FFN}_{\text{geo}}([\mathbf{h}_j \| \mathbf{h}_k])) \in \mathbb{R}^3,
    \end{equation}
    where \( [\mathbf{h}_j \| \mathbf{h}_k] \) denotes the concatenation of embeddings from joint \( j \) and joint \( k \). We apply an \( \ell_2 \) loss to the ground-truth offset vector \( \mathbf{o}_{jk} = \mathbf{p}_k - \mathbf{p}_j \):
    \begin{equation}
        \mathcal{L}_{\text{geo}} = \sum_{(j, k) \in \mathcal{P}} \left\| \hat{\mathbf{o}}_{jk} - \mathbf{o}_{jk} \right\|_2^2,
    \end{equation}
    where \( \mathcal{P} \) is the set of joint pairs for which offset regression is performed.

    \item \textbf{Topological Task — LCA Prediction.}  
    Given two nodes \( (j_1, j_2) \), the model predicts their Least Common Ancestor (LCA) in the skeletal tree. Due to symmetry \( \text{LCA}(j_1, j_2) = \text{LCA}(j_2, j_1) \), we first compute:
    \begin{align}
        \mathbf{q}_{j_1 j_2} &= \text{FFN}_{\text{query}}(\mathbf{h}_{j_1} + \mathbf{h}_{j_2}) \in \mathbb{R}^d, \\
        \mathbf{k}_j &= \text{FFN}_{\text{key}}(\mathbf{h}_j) \in \mathbb{R}^d,\quad \forall j \in \mathcal{J}.
    \end{align}
    We compute LCA probabilities using dot-product attention:
    \begin{equation}
        p_j = \frac{\exp(\mathbf{q}_{j_1 j_2}^\top \mathbf{k}_j)}{\sum_{j' \in \mathcal{J}} \exp(\mathbf{q}_{j_1 j_2}^\top \mathbf{k}_{j'})},
    \end{equation}
    and apply a cross-entropy loss with the ground-truth LCA label \( j^\star \):
    \begin{equation}
        \mathcal{L}_{\text{lca}} = - \log p_{j^\star}.
    \end{equation}

    \item \textbf{Semantic Task — Contrastive Joint Name Matching.}  
    We encourage node features to retain semantic consistency with their joint names. Let \( \mathbf{h}_j \in \mathbb{R}^d \) be the node embedding, and let \( \mathbf{e}_j^{\text{text}} \in \mathbb{R}^{512} \) be the CLIP-encoded joint name. We pass both through learnable projections:
    \begin{align}
        \mathbf{z}_j^{\text{node}} &= \text{FFN}_{\text{node}}(\mathbf{h}_j) \in \mathbb{R}^d, \\
        \mathbf{z}_j^{\text{text}} &= \text{FFN}_{\text{text}}(\text{CLIP}(j\text{-name})) \in \mathbb{R}^d.
    \end{align}
    Then we compute the InfoNCE loss across all joints in a batch:
    \begin{equation}
        \mathcal{L}_{\text{sem}} = - \sum_{j} \log \frac{\exp(\text{sim}(\mathbf{z}_j^{\text{node}}, \mathbf{z}_j^{\text{text}})/\tau)}{\sum_{j'} \exp(\text{sim}(\mathbf{z}_j^{\text{node}}, \mathbf{z}_{j'}^{\text{text}})/\tau)},
    \end{equation}
    where \( \text{sim}(\cdot,\cdot) \) is cosine similarity and \( \tau \) is the temperature.
\end{itemize}

\textbf{Final Loss.} The overall training loss is the weighted sum:
\begin{equation}
    \mathcal{L}_{\text{graph}} = \lambda_{\text{geo}} \mathcal{L}_{\text{geo}} + \lambda_{\text{lca}} \mathcal{L}_{\text{lca}} + \lambda_{\text{sem}} \mathcal{L}_{\text{sem}},
\end{equation}
where \( \lambda_{\text{geo}}, \lambda_{\text{lca}}, \lambda_{\text{sem}} \) are tunable hyperparameters.

\subsection{Proof of Theorem}

\begin{theorem*}
  If a model can correctly determine the LCA for any pair of nodes $(i, j)$ in a tree, then the entire tree topology can be uniquely reconstructed.
\end{theorem*}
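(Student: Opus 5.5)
We read the statement as follows: we are given the joint set $\mathcal{J}$ of a rooted tree (the kinematic tree $\mathcal{S}$) together with an oracle returning $\mathrm{LCA}(i,j)$ for every pair $(i,j)$, including $i=j$ where $\mathrm{LCA}(i,i)=i$. The claim is that the parent map, and hence the edge set $\mathcal{E}$, is uniquely determined. The plan is to recover the ancestor relation from the LCA table, then recover the parent of each node from the ancestor relation, and finally observe that the parent map determines the rooted tree.

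\textbf{Step 1: ancestor relation.} For distinct nodes $a,b$, the node $a$ is an ancestor of $b$ if and only if $\mathrm{LCA}(a,b)=a$. This follows directly from the definition. If $a$ lies on the root-to-$b$ path, then $a$ is a common ancestor of $a$ and $b$, and no proper descendant of $a$ is an ancestor of $a$. Conversely, $\mathrm{LCA}(a,b)$ is by definition an ancestor of $b$. So the LCA table yields the full ancestor relation $\preceq$, and the root is identified as the unique node $r$ with $\mathrm{LCA}(r,j)=r$ for all $j$.

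\textbf{Step 2: parents from ancestors.} For a non-root node $j$, let $A(j)=\{a\neq j : \mathrm{LCA}(a,j)=a\}$ be its set of proper ancestors. In a rooted tree, $A(j)$ is exactly the vertex set of the path from the root to the parent of $j$, so it is totally ordered by $\preceq$. The parent is the unique maximal (deepest) element, which can be characterized as the $p\in A(j)$ with $|A(p)|=|A(j)|-1$, or equivalently the $p\in A(j)$ with $\mathrm{LCA}(a,p)=a$ for all $a\in A(j)$. Setting $\mathcal{E}=\{(p(j),j)\}$ then reconstructs the tree.

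\textbf{Step 3: uniqueness.} Suppose two rooted trees $T,T'$ on $\mathcal{J}$ have identical LCA tables. Then Step 1 shows they have the same ancestor relation, and Step 2 shows they have the same parent map, so $T=T'$.

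The only point that requires care is the total-order claim in Step 2. It holds because each node has a unique parent, so the ancestors of $j$ form a single chain. The statement also implicitly assumes the tree is rooted, which is the BVH setting, and that node identities (joint labels) are given. For an unrooted tree the LCA is not defined, so this assumption is harmless. Beyond that, the argument is routine; the main obstacle is mainly stating the definitions precisely.
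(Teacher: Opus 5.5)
Your proof is correct, but it takes a different route from the paper's. The paper argues recursively and top-down. It identifies the root $r$ as the node with $\mathrm{LCA}(r,v)=r$ for all $v$, and splits the remaining nodes into the child subtrees of $r$ using the test $\mathrm{LCA}(u,v)\neq r$. It then takes the local root of each group as a child of $r$ and recurses inside each group. You avoid recursion entirely: you first read off the full ancestor relation from the entries with $\mathrm{LCA}(a,b)=a$. You then define every parent pointwise as the deepest proper ancestor, e.g.\ the $p\in A(j)$ with $|A(p)|=|A(j)|-1$. Your route makes uniqueness an immediate consequence of an explicit formula: the parent map is a function of the LCA table, so two trees with the same table coincide. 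The paper instead rests uniqueness on the more informal observation that each step of its procedure has a unique outcome. Your route also makes clear that only the predicate $\mathrm{LCA}(a,b)=a$ is ever consulted, i.e.\ the tree is already determined by the ancestor relation. The paper's divide-and-conquer route yields an explicit reconstruction algorithm that recovers the child-subtree decomposition at every level, mirroring the skeleton's hierarchy. The price is that it must argue the grouping test carves out exactly the child subtrees.
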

  
\begin{proof}[Constructive proof]
We describe an explicit reconstruction procedure that uses only LCA queries.

\textbf{Step 1: Find the root.}
Scan all nodes and pick the unique node $r$ such that for every other node $v$, $\mathrm{LCA}(r,v)=r$. 
A non-root node $u$ fails this check because $\mathrm{LCA}(u,\text{parent}(u))=\text{parent}(u)\neq u$.
Thus $r$ is identified.

\textbf{Step 2: Split into the root’s child subtrees.}
Consider all nodes except $r$. 
Place two nodes $u$ and $v$ into the same group iff $\mathrm{LCA}(u,v)\neq r$.
Nodes from different child subtrees of $r$ have LCA $=r$, so they fall into different groups; 
nodes from the same child subtree never produce $r$ as their LCA, so they fall into the same group.
Hence each group is exactly one child subtree of $r$.

\textbf{Step 3: Identify each child of the root.}
In each group $C$, find the unique node $c\in C$ such that for all $v\in C$, $\mathrm{LCA}(c,v)=c$.
This $c$ is the root of that group’s subtree, i.e., a direct child of $r$.
Add edge $(r,c)$.

\textbf{Step 4: Recurse.}
Now treat each group $C$ as an independent problem:
use the same two tests inside $C$ (with $\mathrm{LCA}$ restricted to pairs in $C$) to find the local root $c$ of $C$, split $C\setminus\{c\}$ by whether the LCA equals $c$, identify $c$’s children, add edges, and recurse until all groups are singletons.

This procedure terminates after assigning a unique parent to every non-root node, thereby reconstructing all edges. 
Because every step is determined solely by LCA answers and yields a unique outcome (unique root, unique grouping, unique local roots), the recovered tree is unique.
\end{proof}

\subsection{Residual VQ-VAE}

To convert continuous motion features into discrete tokens, we adopt a Residual Vector Quantized Variational Autoencoder (RVQVAE) as the tokenization backend. Compared to single-level VQ-VAE, the residual formulation improves expressiveness without increasing token sequence length, which is essential for high-fidelity motion reconstruction and efficient generation.

Given the encoder output \( \mathbf{Z}_{\text{joint}} \in \mathbb{R}^{B \times T/r \times J \times W} \), we apply residual quantization in \( R \) stages. Each stage learns a separate codebook \( \mathcal{C}^{(r)} = \{\mathbf{c}_k^{(r)}\}_{k=1}^{K} \subset \mathbb{R}^{d} \), where \( r = 1, \dots, R \) and \( K \) is the number of codewords.

The quantization is performed sequentially:

\begin{equation}
    \mathbf{z}^{(0)} = \mathbf{Z}_{\text{joint}}, \mathbf{z}^{(r)} = \mathbf{z}^{(r-1)} - \text{Quantize}\left(\mathbf{z}^{(r-1)}; \mathcal{C}^{(r)}\right).
\end{equation}

The final quantized feature is reconstructed as:
\begin{equation}
    \hat{\mathbf{Z}}_{\text{joint}} = \sum_{r=1}^{R} \text{Quantize}\left(\mathbf{z}^{(r-1)}; \mathcal{C}^{(r)}\right).
\end{equation}

We apply the same process to the global token \( \mathbf{z}_{\text{token}} \) in parallel. Each quantized codeword index \( z_{t,j}^{(r)} \in \{1, \dots, K\} \) is stored as part of the discrete token matrix \( \mathbf{Z} \in \mathbb{N}^{T/r \times R} \) for downstream modeling.

\subsection{Ablation on RVQ Depth}
\label{sec:app_rvq_depth}
Table~\ref{tab:rvq_tokenizer} studies the effect of RVQ depth $R$.
Shallow RVQ (1--2 codebooks) has insufficient capacity, leading to large MPJPE and GeoDist.
Increasing depth to 4--6 substantially improves reconstruction across all datasets.
While RVQ-8 yields slightly lower GeoDist on some splits, it degrades MPJPE on HumanML3D and Zoo,
suggesting diminishing returns and reduced robustness.
We therefore use RVQ-6 as a balanced default throughout the paper.

\begin{table*}[t]
\centering
\caption{\textbf{Ablation on RVQ Tokenizer Depth.}}
\label{tab:rvq_tokenizer}
\scriptsize
\setlength{\tabcolsep}{6pt}
\begin{tabular}{l|c|c|c|c|c|c|c|c|c}
\toprule
\multirow{2}{*}{Method} 
& \multicolumn{3}{c|}{\textbf{MPJPE} $\downarrow$} 
& \multicolumn{3}{c|}{\textbf{MPJPE (no trans)} $\downarrow$} 
& \multicolumn{3}{c}{\textbf{GeoDist} $\downarrow$} \\
\cmidrule(lr){2-4}\cmidrule(lr){5-7}\cmidrule(lr){8-10}
& H3D & Obj-XL & Zoo 
& H3D & Obj-XL & Zoo 
& H3D & Obj-XL & Zoo \\
\midrule
RVQ-1 & 0.3663 & 0.1910 & 0.2200 & 0.1328 & 0.1439 & 0.0799 & 6.41$^\circ$ & 21.89$^\circ$ & 16.71$^\circ$ \\
RVQ-2 & 0.2232 & 0.1586 & 0.3503 & 0.1066 & 0.1237 & 0.0785 & 6.03$^\circ$ & 18.30$^\circ$ & 16.00$^\circ$ \\
RVQ-4 & 0.1052 & 0.1053 & 0.1157 & 0.0647 & 0.0869 & 0.0612 & 4.23$^\circ$ & 13.88$^\circ$ & 14.12$^\circ$ \\
RVQ-6 & 0.1084 & 0.0983 & 0.1008 & 0.0588 & 0.0787 & 0.0635 & 3.96$^\circ$ & 12.12$^\circ$ & 13.88$^\circ$ \\
RVQ-8 & 0.1229 & 0.0904 & 0.1526 & 0.0579 & 0.0684 & 0.0581 & 3.84$^\circ$ & 11.49$^\circ$ & 13.04$^\circ$ \\
\bottomrule
\end{tabular}
\end{table*}

\begin{figure*}[t]
  \centering
  \includegraphics[width=\textwidth]{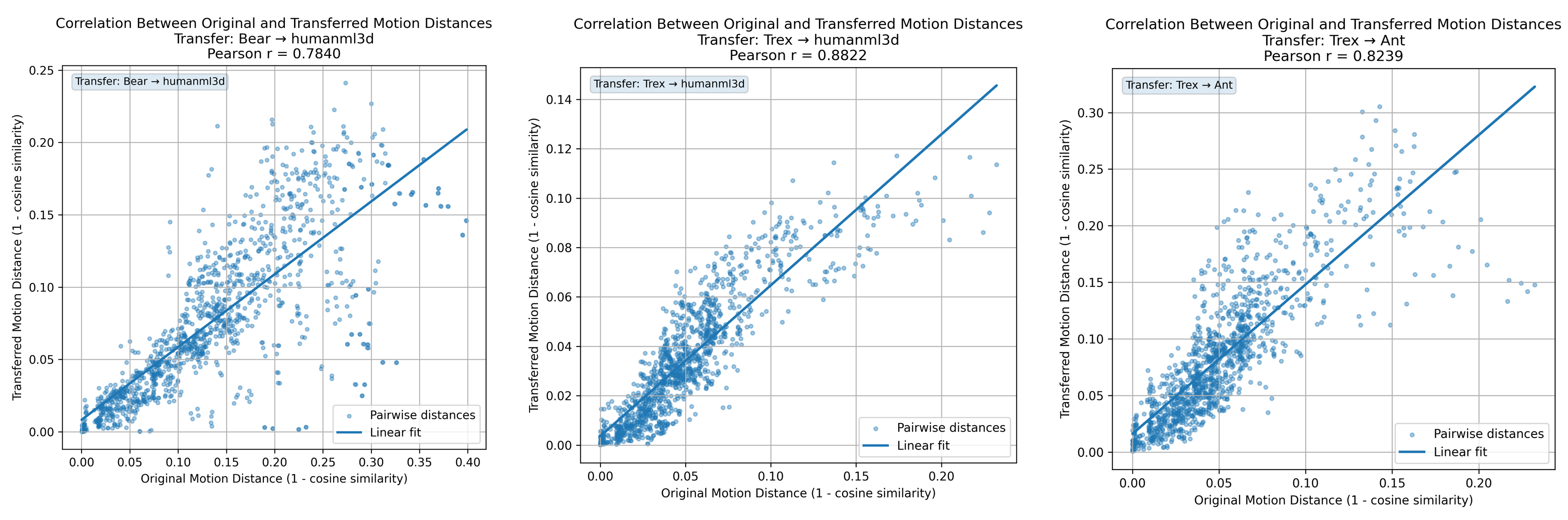}
  \caption{
  \textbf{Cross-skeleton motion distance correlation under topology transfer.}
  Each dot corresponds to a pairwise motion distance computed on the \emph{source} skeleton (x-axis) and the corresponding distance after retargeting to a \emph{target} skeleton (y-axis).
  Strong positive correlations (Pearson $r$) indicate that NEC preserves motion semantics under cross-topology transfer.
  }
  \label{fig:correlation}
\end{figure*}

\section{Evaluation Metrics}
\label{sec:metrics}
{For \textbf{Text-to-Motion (T2M)}, follow existing works \cite{guo2022tm2t, guo2023momask, jiang2024motiongpt, wang2024motiongpt2, zhu2025motiongpt3}, we evaluate motion quality and text–motion alignment. Motion realism is measured by Fréchet Inception Distance (FID), while R-Precision (R@1/2/3) assess semantic consistency between motion and text. }

\textbf{R-Precision.} 
R-Precision measures retrieval performance by computing the fraction of relevant items within the top-$R$ retrieved results. {In text-to-motion, this means retrieving the correct motion from a database given a text query, or vice versa.  }

\begin{equation}
R\text{-Prec} = \frac{| \text{Rel} \cap \text{Top-}R |}{R}
\end{equation}

where $\text{Rel}$ is the set of relevant items (ground-truth matches) and $\text{Top-}R$ is the set of retrieved items at rank $R$. The metric ranges from 0 to 1, with higher values indicating better retrieval accuracy.

\textbf{Fr\'echet Inception Distance (FID).} 
FID~\cite{guo2022generating} measures the distributional distance between real and generated samples in a feature space, capturing both mean and covariance statistics. Lower FID indicates that generated samples are closer to real samples in distribution.  

\begin{equation}
\text{FID} = \| \mu_r - \mu_g \|_2^2 + \mathrm{Tr}\big(\Sigma_r + \Sigma_g - 2(\Sigma_r \Sigma_g)^{1/2}\big)
\end{equation}

where $\mu_r, \Sigma_r$ are the mean and covariance of real samples in the feature space, and $\mu_g, \Sigma_g$ are the corresponding statistics of generated samples. The first term measures the distance between means, while the second term accounts for differences in covariance structure.

\textbf{MPJPE.} Mean Per Joint Position Error (MPJPE) is a widely used metric to evaluate the accuracy of reconstructed 3D skeletons against ground-truth skeletons. It computes the average Euclidean distance between corresponding joints.  
\begin{equation}
\text{MPJPE} = \frac{1}{T \cdot J} \sum_{t=1}^{T} \sum_{j=1}^{J} \left\lVert \hat{\mathbf{x}}_{t,j} - \mathbf{x}_{t,j} \right\rVert_{2},
\end{equation}
where $\hat{\mathbf{x}}_{t,j} \in \mathbb{R}^{3}$ denotes the predicted 3D position of joint $j$ at frame $t$, and $\mathbf{x}_{t,j} \in \mathbb{R}^{3}$ is the corresponding ground-truth joint position. $T$ is the number of frames and $J$ is the number of joints. The Euclidean norm $\lVert \cdot \rVert_{2}$ measures the spatial error per joint, and MPJPE averages this over all joints and frames.  

\textbf{Geodesic Distance.} Geodesic distance is a metric to evaluate the difference between two 3D rotation matrices, often used for skeletal joint rotations. It measures the shortest distance along the manifold of the special orthogonal group $SO(3)$.  
\begin{equation}
\text{Geo}(\hat{R}, R) = \arccos \left( \frac{\text{trace}(\hat{R} R^{\top}) - 1}{2} \right),
\end{equation}
where $\hat{R}, R \in SO(3)$ denote the predicted and ground-truth $3 \times 3$ rotation matrices, and $\text{trace}(\cdot)$ is the matrix trace operator. This formula computes the geodesic angular distance in radians between the two rotations. When averaged across joints and frames, this metric captures how well the predicted rotations align with the ground truth.

\begin{figure*}[t]
\centering
\includegraphics[width=\textwidth]{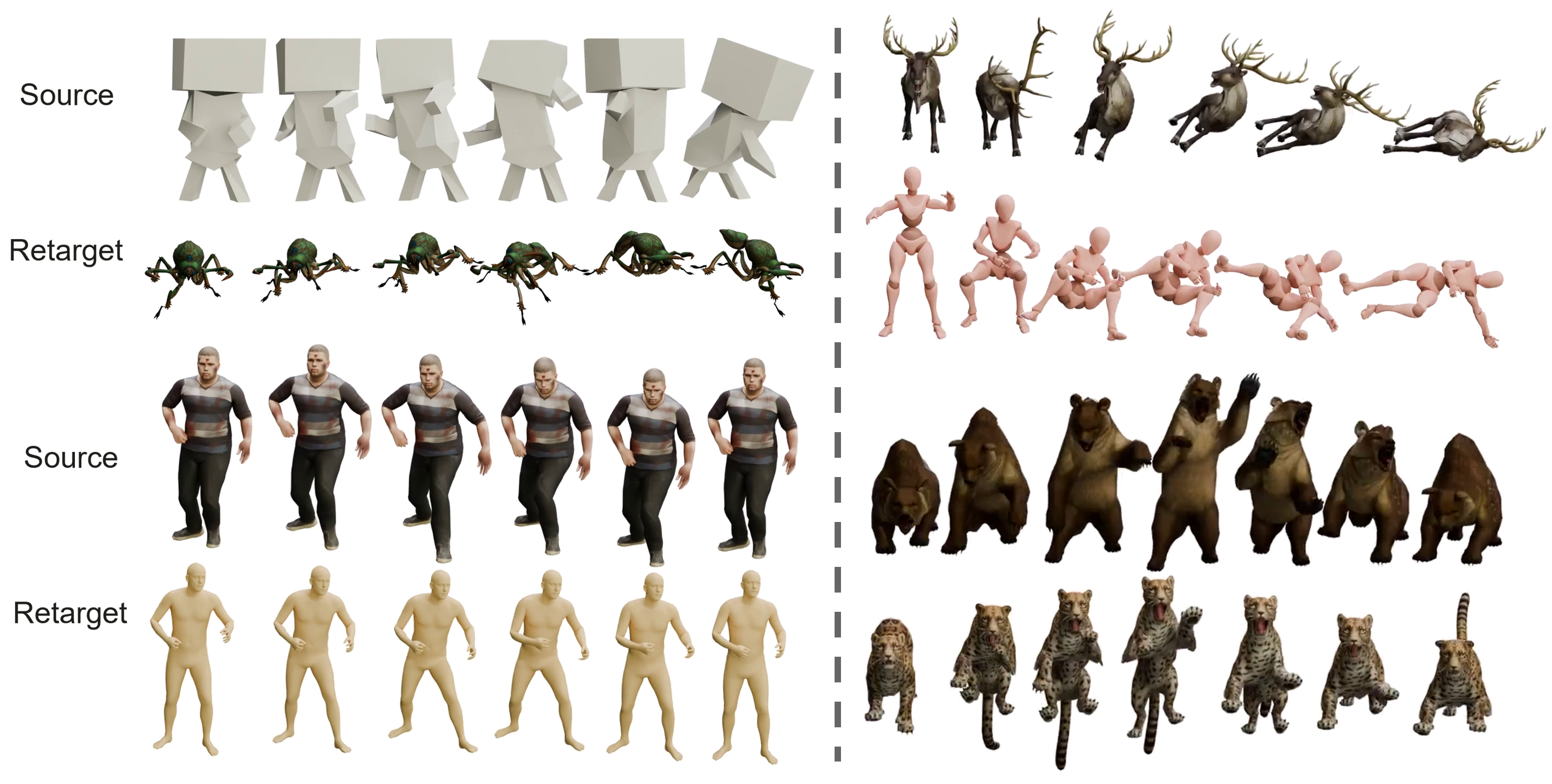}
\caption{Qualitative motion transfer results across different skeletons and object categories.}
\label{fig:qual_token_ops}
\end{figure*}

\section{Motion Transfer Results}
\label{app:trans_res}

\subsection*{Qualitative Results}

We demonstrate \textbf{motion transfer} (source tokens + target OwO) across arbitrary topologies (Fig.~\ref{fig:qual_token_ops}).
\emph{Crucially, this requires no task-specific fine-tuning or auxiliary heads}: because NEC produces \emph{topology-agnostic} tokens and the decoder is \emph{OwO-conditioned}, a single trained model supports \textbf{zero-shot transfer} by decoding source token sequences under different target OwOs (morphology swap).
This enables direct motion transfer between species with diverse skeletons while maintaining temporal coherence.

\subsection*{Quantitative Results}
Table~\ref{tab:ablation_transfer_loss} quantitatively validates the qualitative transfer results in Fig.~\ref{fig:qual_token_ops}.
By transferring 512 randomly sampled motions to target skeletons from different datasets while preserving their original text annotations, we evaluate motion--text transfer retrieval using R-Precision.

Overall, configurations with structural objectives, particularly 0111 (LCA+Dist.+Con.) and 1111, achieve the best transfer performance across datasets and retrieval depths.
These trends closely match the main ablation results in Table~\ref{tab:ablation_loss}, indicating that improvements from structural learning generalize beyond reconstruction to cross-skeleton semantic alignment.
We note that retrieval performance on Zoo is consistently lower, as its text annotations explicitly include species-specific descriptors, making correct retrieval inherently more challenging after cross-species transfer.

The results confirm that topology-agnostic tokens combined with OwO-conditioned decoding enable zero-shot motion transfer that preserves sufficient semantic information for text retrieval, making cross-species motion transfer feasible in practice.

\section{Additional Analysis on Topology Invariance}
\label{sec:app_topology_analysis}

To evaluate whether NEC truly learns a topology-agnostic motion representation—rather than relying on joint-index alignment—we analyze the preservation of pairwise motion distances under cross-skeleton transfer.
For a given source skeleton, we compute pairwise distances between its motion sequences in the learned latent space.
We then retarget the same motions to a different skeleton and recompute the corresponding distances.

Fig.~\ref{fig:correlation} plots the original distances against the transferred distances.
If the representation were tied to a fixed joint ordering or topology, the correlation would collapse.
Instead, we observe strong positive correlations across diverse morphological gaps, including quadruped$\rightarrow$human, bipedal dinosaur$\rightarrow$human, and bipedal dinosaur$\rightarrow$multi-leg insect transfers.

These results indicate that NEC preserves relative motion semantics independently of skeletal topology, providing quantitative evidence for topology-invariant motion encoding without explicit supervision for cross-species transfer.

\section{Limitations and Future Directions}
\label{app:limitations}

\subsection*{Limitations}

\begin{itemize} \item \textbf{Transfer and Generate Performance}: The experimental results on cross-skeleton transfer and motion generation tasks are not particularly impressive. This is primarily because our current work focuses on constructing a unified BVH representation framework and learning topology-agnostic motion representations, while the transfer and generation functionalities serve only as application examples of the proposed framework. Therefore, these limitations should not be overinterpreted as core defects of the entire method, but rather reflect areas for improvement in the specific application implementation.
\item \textbf{Data Dependency}: Like many data-driven approaches, the performance of NECromancer is influenced by the quality and diversity of the training data. While our dataset is carefully constructed to cover a wide range of motions and skeletal configurations, it still provides limited coverage of rare or highly complex topologies, which may affect generalization in these cases.

\item \textbf{Robustness to Skeletal Structure Changes}: Although this method claims to achieve topology-agnostic motion generation, practical applications show that when there are significant differences between source and target skeletons (e.g., quadruped to human conversion), the model may produce inaccurate retargeting results. This indicates that the adaptability to extreme topology variations still needs improvement.

\item \textbf{High Computational Complexity}: The model employs complex graph neural network architectures and multi-stage variational autoencoders (VQ-VAE) for modeling, resulting in lengthy training and inference times, especially when processing large-scale animation sequences. This poses challenges for real-time application scenarios.

\item \textbf{Lack of Explicit Physical Constraint Modeling}: Although unreasonable physical cases are eliminated through preprocessing steps, the model itself does not explicitly incorporate rigid body dynamics or other physical rules to ensure the authenticity and stability of generated motions.

\item \textbf{Semantic Consistency in Text-to-Motion Generation}: In text-driven tasks, despite employing contrastive learning techniques to enhance semantic consistency, the model may still generate actions that do not align with the input text due to the ambiguity and polysemy of natural language descriptions.
\end{itemize}

\subsection*{Future Directions}

\begin{itemize} \item \textbf{Enhanced Cross-Species/Topology Generalization}: Further exploration can be conducted to enable the model to better understand common structural features (such as joint degrees of freedom and movement patterns) across different organisms, thereby improving performance in extreme topology conversions. This could involve incorporating richer prior knowledge, such as skeletal structure classification systems based on evolutionary biology.
\item \textbf{Efficiency and Scalability Optimization}: To address the high computational overhead of the current model, future work can explore lightweight graph neural network architectures and combine knowledge distillation and model compression techniques to reduce deployment costs. Additionally, distributed training strategies can be investigated to support larger-scale datasets.

\item \textbf{Integration of Physics Simulation Mechanisms}: Introducing physics engines as post-processing modules to validate and correct generated motions can ensure compliance with real-world motion laws, thereby enhancing generation quality and credibility.

\item \textbf{Improved Text Understanding Accuracy}: Incorporating more advanced language models (such as GPT-4 or Qwen series) can improve text parsing capabilities, enabling more precise capture of user intentions and contextual information. Furthermore, integrating visual information (such as images or videos) can facilitate multimodal joint modeling.

\item \textbf{Extension to More Animation Content Types}: Future work can expand to other domains such as robot control, and virtual character interactions. This would require customized modeling approaches and evaluation metrics for specific scenarios.

\item \textbf{Introduction of Dynamic Environment Awareness}: The current model primarily focuses on motion generation under static skeletons. Future work can incorporate environmental factors (such as terrain or obstacles) to make generated motions more contextually adaptive, suitable for virtual reality and game development applications.

\end{itemize}

\end{document}